\DeclareMathOperator*{\argmin}{arg\,min}
\newcommand{\irj}{{}^i\mathbf{r}_j}
\newcommand{\newtext}[1]{{\color{black}{{#1}}}}
\newtheorem{theorem}{Theorem}
\theoremstyle{definition}
\newtheorem{definition}{Definition}[section]
\title{Robust Trajectory Generation and Control for Quadrotor Motion Planning with Field-of-View Control Barrier Certification}
\author{Lishuo Pan$^{1}$, Mattia Catellani$^{2}$, Lorenzo Sabattini$^{2}$, Nora Ayanian$^{1}$%
\thanks{Manuscript received June 11, 2025; Revised September 17, 2025; Accepted October 20, 2025. 
This paper was recommended for publication by Editor Tamim Asfour and Editor M. Ani Hsieh upon evaluation of the Associate Editor and Reviewers' comments.
This work was supported by NSF grants 2317145, 2311967, 2330942.} 
\thanks{$^{1}$Lishuo Pan and Nora Ayanian are with the Department of Computer Science, Brown University, Providence, RI 02912 USA. 
        Email: {\tt\footnotesize \{lishuo\_pan, nora\_ayanian\}@brown.edu}.}\\ %
\thanks{$^{2}$Mattia Catellani and Lorenzo Sabattini are with the Department of Sciences and Methods for Engineering, University of Modena and Reggio Emilia, 41121 Modena, Italy.
        Email: {\tt\footnotesize \{mattia.catellani, lorenzo.sabattini\}@unimore.it}.}%
}
\begin{document}

\maketitle

\begin{abstract}
Many approaches to multi-robot coordination are susceptible to failure due to communication loss and uncertainty in estimation. We present a real-time communication-free distributed navigation algorithm certified by control barrier functions, that models and controls the onboard sensing behavior to keep neighbors in the limited field of view for position estimation. The approach is robust to temporary tracking loss and directly synthesizes control to stabilize visual contact through control Lyapunov-barrier functions. The main contributions of this paper are a continuous-time robust trajectory generation and control method certified by control barrier functions for distributed multi-robot systems and a discrete optimization procedure, namely, MPC-CBF, to approximate the certified controller. In addition, we propose a linear surrogate of high-order control barrier function constraints and use sequential quadratic programming to solve MPC-CBF efficiently. 

\end{abstract}

\IEEEpeerreviewmaketitle

\section{Introduction}
Multi-robot systems, such as those used in search and rescue~\cite{drew2021multi}, active target tracking~\cite{liu2024multi}, and collaborative transportation~\cite{li2023nonlinear}, demand real-time distributed coordination solutions robust to communication compromises. 
Communication is vulnerable to adversarial attacks and faces challenges such as dropped messages, delays, and scalability~\cite{gielis2022critical}. In contrast, onboard sensing to estimate neighbors' states is robust to compromised communication.  However, one major challenge is dealing with  imperfect perception. 
For instance, onboard cameras typically have a restricted angular field of view, leading to a trade-off between task completion and neighbor detection. 
Although $360^{\circ}$ cameras offer an omnidirectional field of view, object tracking on $360^{\circ}$ images remains challenging due to significant distortions and stitching artifacts~\cite{huang2023360vot}. 
In addition, the limited frame rate and uncertainty pose challenges for practical applications. For instance, a robot may lose track of a neighbor due to image blur caused by the vehicle motion or inaccurate estimation due to measurement noise. 

In this work, we present a real-time robust trajectory generation and control strategy for navigation tasks in distributed multi-robot systems that respects visual contact in communication-denied environments. 
Keeping neighbors in the field of view during navigation is challenging due to the abovementioned limitations. When estimation uncertainty is present or it is infeasible to track all neighbors, temporary compromises of field-of-view constraints are inevitable. A robust controller is demanded to regain the visual contact between robots in such scenarios.

Our robust control strategy utilizes control barrier functions (CBFs)~\cite{xiao2021hoclbf} to maintain visual contact with neighbors and regain it after temporary loss. 
\begin{figure}[t]
    \centering
    {\includegraphics[width=0.48\textwidth]{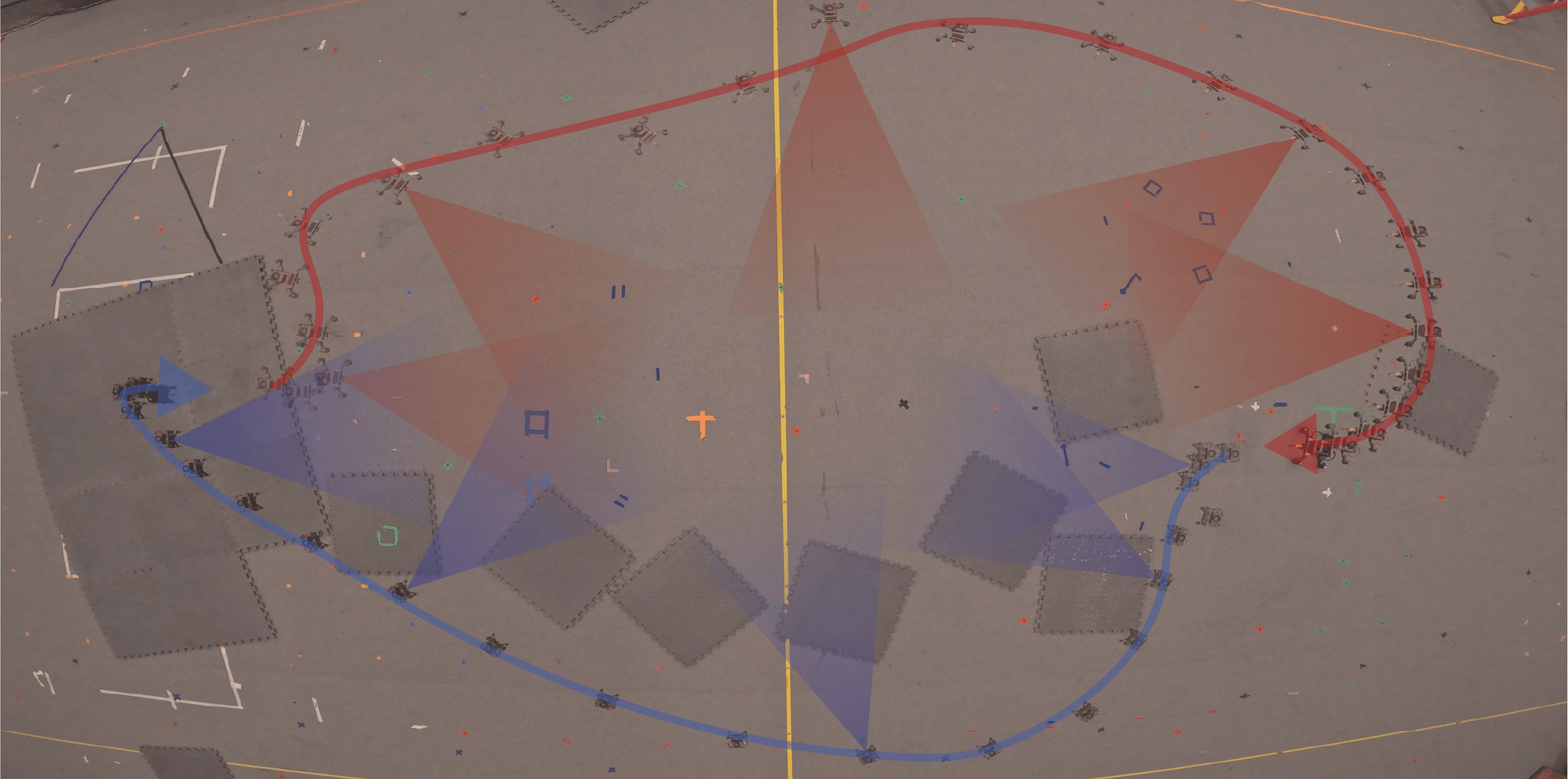}}\\
    \caption{Long exposure top view of $2$ quadrotors navigating with distributed controller respecting field-of-view constraints. The red and blue triangles are the fields of view of UAV1 and UAV2. The sensing ranges extend beyond triangles and are omitted in the figure. Curves represent the robot routes.} 
    \label{fig:demo}
    \vspace{-1em}
\end{figure}
We consider a continuous-time trajectory and control generation problem certified by CBFs. We consider a double integrator model, which requires high-order control barrier functions (HOCBFs) to satisfy the field-of-view constraints. Tracking robustness follows from the Lyapunov-like property of HOCBFs. 
We propose a discrete optimization framework, model predictive control with control barrier functions (MPC-CBF). Since imposing HOCBF constraints for all times in the horizon is intractable, MPC-CBF applies constraints at sampled time stamps to approximate the certified solution.
We introduce a linear surrogate for HOCBF constraints and solve MPC-CBF via sequential quadratic programming (SQP) using a quadratic programming (QP) solver. Our framework jointly generates 
a certified continuous-time trajectory and a controller in real time using piecewise splines. Our algorithm provides inputs up to an arbitrary order of derivatives. 
Our contributions can be summarized as follows:
\begin{itemize}
    \item a real-time distributed controller that maintains visual contact between robots in communication-denied areas and tolerates temporary tracking loss during navigation;
    \item a continuous-time spline-based trajectory and control generation method certified by control barrier functions;
    \item an optimization framework, namely MPC-CBF, that imposes the HOCBF constraints at sampled time stamps and approximates the certified solution. We will open-source our proposed algorithm at \url{https://github.com/LishuoPan/fovmpc}.
\end{itemize}
\newtext{We demonstrate our algorithm in simulations with up to $10$ robots and in physical experiments with $2$ custom-built UAVs, shown in Fig.~\ref{fig:demo}.}


\section{Related Work}
Control barrier functions provide sufficient and necessary conditions to guarantee safety~\cite{ames2019control}. 
Despite success in collision avoidance~\cite{abdi2023safe}, lane keeping and adaptive cruise control~\cite{xu2017correctness}, CBFs synthesize reactive control that can be short-sighted behaviors in planning tasks. For instance, CBF-based controllers can cause deadlocks in multi-robot navigation~\cite{wang2017safety}, and \newtext{aggressive trajectory and heading controls that lead to more failures in visual maintenance} and goal reaching in visual contact navigation~\cite{catellani2023distributed}. 
\newtext{CBFs have recently been applied to limited field-of-view problems: in~\cite{trimarchi2024control} for only static gates in drone racing, in~\cite{zhou2024control} for pursuit settings to track a moving target, and in~\cite{santilli2022multirobot} for multi-robot navigation with triangular sensing. However, the approach in~\cite{santilli2022multirobot} requires centralized intervention and omnidirectional sensing.}
In this work, we combine planning with CBFs to overcome such disadvantages. 


Attempts have been made to combine MPC with CBFs in a discrete-time formulation~\cite{zeng2021safety}. 
Our continuous-time controller improves system responsiveness and stability. 
In continuous-time formulations, a multi-layer controller~\cite{sforni2024receding} solves optimal control with CBF constraints. Our approach provides additional smoothness and derivatives up to an arbitrary order. 
A spline-based trajectory generation method imposes CBF constraints utilizing polygonal cells~\cite{dickson2024spline}; however, it only solves the trajectory and requires separate control synthesis. Instead, our framework solves trajectory and control \emph{concurrently}. 
Our approach has the following advantages. 1) It unifies trajectory and control generation: control inputs can be computed directly from the trajectory. 2) It provides smooth control inputs up to an arbitrary derivative order. 
3) It delivers \emph{continuous-time} trajectory and control, 
thus responding better to delays and stabilizing agile systems. 4) It performs in real time. 


\section{Preliminaries}

\subsection{Bézier Curve}
We use piecewise splines $f(t)$ to impose smoothness requirements in the trajectory generation problem and obtain a trajectory with derivatives up to an arbitrary defined order.
The $i$-th Bézier curve in the piecewise splines $f_{i}: \left[ 0, \tau_{i} \right] \rightarrow \mathbb{R}^{d}$ is parameterized by time, with duration $\tau_{i}$.
The Bézier curve of arbitrary degree $h$ with duration $\tau_{i}$ is defined by $h+1$ control points $\boldsymbol{\mathcal{U}}_{i} = \left[\boldsymbol{u}_{i,0};\ldots;\boldsymbol{u}_{i,h}\right]$. We first construct Bernstein polynomials $\boldsymbol{B}_{v}^{h}\in \mathbb{R}$ of degree $h$\newtext{~\cite{joy2000bernstein}}:
\begin{align}
    \boldsymbol{B}_{v}^{h} = \binom{h}{v} \left(\frac{t}{\tau}\right)^{v} \left(1-\frac{t}{\tau}\right)^{h-v}, \forall t\in \left[0,\tau\right],
\end{align}
where $v=0,1,\cdots,h$.
A $d$-dimensional Bézier curve is defined as $f_{i}(t) = \sum_{v=0}^{h}\boldsymbol{u}_{i,v}\boldsymbol{B}_{i,v}^{h}$ with $\boldsymbol{u}_{i,v}\in\mathbb{R}^{d}$. The finite set of control points $\boldsymbol{\mathcal{U}} = \left[\boldsymbol{\mathcal{U}}_{0}; \ldots; \boldsymbol{\mathcal{U}}_{P-1}\right]$ uniquely characterize a piecewise spline of $P$ Bézier curves and act as decision variables in the trajectory generation problem. The duration of the entire piecewise spline is $\tau = \sum_{i=0}^{P-1} \tau_{i}$.

\subsection{High-Order Control Barrier Functions}
Consider a system in the form
\begin{equation}\label{eq:system}
    \dot{\mathbf{x}} = f(\mathbf{x}) + g(\mathbf{x})\mathbf{u},
\end{equation}
where $f : \mathbb{R}^p \rightarrow \mathbb{R}^p$ and $g : \mathbb{R}^p \rightarrow \mathbb{R}^{p\times m}$ are Lispschitz continuous functions, and $\mathbf{u} \in U \subset \mathbb{R}^m$ is the control input, where $U$ is the set of admissible control inputs for $\mathbf{u}$. Let $\mathcal{C} := \{\mathbf{x} \in \mathbb{R}^p \mid b(\mathbf{x}) \geq 0\}$ be the set of configurations satisfying the safety requirements, i.e., the safe set. 
\begin{definition}[CBF~\cite{ames2014control, ames2019control}]\label{def:cbf}
Given a set $\mathcal{C}$, the function $b : \mathbb{R}^p \rightarrow \mathbb{R}$ is a candidate CBF for system~\eqref{eq:system} if there exists a class $\mathcal{K}$ function
$\alpha$ such that
\begin{equation}\label{eq:cbf_property}
    \sup_{u\in U} [ L_fb(\mathbf{x}) + L_gb(\mathbf{x})\mathbf{u} + \alpha(b(\mathbf{x})) ] \geq 0,
\end{equation}
for all $\mathbf{x}\in \mathcal{C}$, where $L_f$ and $L_g$ are the Lie derivatives~\cite{yano2020theory} along $f$ and $g$, respectively.
According to~\cite{ames2019control}, given a CBF $b$ and a safe set $\mathcal{C}$, any Lipschitz continuous controller $\mathbf{u}(t)$ that satisfies~\eqref{eq:cbf_property} makes the set $\mathcal{C}$ \emph{forward invariant} for system~\eqref{eq:system}, i.e., if $\mathbf{x}(t_0) \in \mathcal{C}$, then $\mathbf{x}(t) \in \mathcal{C}$, $\forall t \geq t_0$.
\end{definition}

If $b$ has relative degree $q > 1$ with respect to system~\ref{eq:system}, the control input $\mathbf{u}(t)$ does not appear in~\eqref{eq:cbf_property};
HOCBFs have been developed for this scenario~\cite{xiao2021high}. Consider functions $\psi_i : \mathbb{R}^p \rightarrow \mathbb{R}$, $i \in \{ 1, \dots, q  \}$ defined as $\psi_i(\mathbf{x}) = \dot\psi_{i-1} (\mathbf{x}) + \alpha_i(\psi_{i-1}(\mathbf{x}))$
where $\psi_0(\mathbf{x}) = b(\mathbf{x})$. 
Furthermore, we define a sequence of sets $\{\mathcal{C}_i\}_{i=1}^{q}$ as $\mathcal{C}_i := \{ \mathbf{x} \in \mathbb{R}^p \mid \psi_{i-1}(\mathbf{x}) \geq 0  \}$.
\begin{definition}[HOCBF~\cite{xiao2021high}]\label{def:hocbf}
Let us consider a sequence of sets $\{\mathcal{C}_{i}\}_{i=1}^{q}$, and a sequence of equations $\{\psi_{i}\}_{i=1}^{q}$ as previously defined.A function $b : \mathbb{R}^n \rightarrow \mathbb{R}$ is a candidate HOCBF of relative degree $q$ for system~\eqref{eq:system} if there exist $(q-i)$-th order differentiable class $\mathcal{K}$ functions $\{\alpha_i\}_{i=1}^{q}$ such that:
\begin{multline}\label{eq:hocbf_def}
    \sup_{\mathbf{u} \in U} [L_f^q b(\mathbf{x}) + [L_gL_f^{q-1}b(\mathbf{x})]\mathbf{u} + O(b(\mathbf{x})) \\
    + \alpha_q(\psi_{q-1}(\mathbf{x}))] \geq 0, 
\end{multline}
for all $\mathbf{x}\in \cap_{i=1}^{q} \mathcal{C}_i$, where $O(\cdot) =\sum_{i=1}^{q-1} L_f^i ( \alpha_{q-i} \circ \psi_{q-i-1})(\mathbf{x})$.
Any Lipschitz continuous controller $\mathbf{u}(t) \in U$ that satisfies~\eqref{eq:hocbf_def} renders the set $\cap_{i=1}^{q} \mathcal{C}_i$ forward invariant (see~\cite[Theorem 4]{xiao2021high}).
\end{definition}

\newtext{The robustness of a HOCBF is accomplished by showing the existence of a high-order control lyapunov-barrier function (HOCLBF)~\cite{xiao2021hoclbf}, which implies asymptotic stability of the set $\cap_{i=1}^{q} \mathcal{C}_i$ by using a set of \textit{extended} class $\mathcal{K}$ functions $\{\alpha_{i}\}_{i=1}^{q}$.} 

\subsection{Neighbors State Estimation}\label{sec:filter}
We adopt particle filters for neighbor position estimation~\cite{catellani2023distributed}. 
The neighbor's position belief is represented by $N_{p} \in \mathbb{N}$ weighted particles, each indicating a possible position.
Particles evolve following a transition function, update weights with new measurements, and are resampled accordingly. 
When the target is outside the field of view, we reduce the weight of particles inside the sensing region $\mathcal{F}_i$ by setting $w_j^k \!\leftarrow\! \varepsilon w_j^k$, where $\varepsilon \!\in\! [0, 1)$. \newtext{Unlike~\cite{catellani2023distributed}, our method accommodates occasional missed detections by uniformly scaling weights of particles within $\mathcal{F}_i$, preserving the normalized distribution. 
}
\section{Problem Formulation}
Consider $N$ homogeneous robots in a communication-denied workspace $\mathcal W$. Let $\mathcal{R}(\mathbf{r}_{i})$ denotes the convex set of points representing robot $i$ at position $\mathbf{r}_{i}\in \mathbb{R}^{3}$. 
Robots generate trajectories and controls concurrently in a decentralized manner to reach goals while maintaining visual contact and avoiding collisions without communication. Each robot estimates others' positions via the field of view of an onboard camera. \newtext{We assume that the initial teammate positions are shared via a pre-mission communication.}
Our optimization problem solves trajectory and control by minimizing the control effort, and the distance to the goal, subject to dynamics, initial state, control continuity, safety corridor, and CBF constraints.

\subsection{Robot model}
\label{sec:robot_model}
We consider a double integrator, for faster computation and a unified planning and control framework using Bézier curve. 
The state includes position, yaw, and corresponding first-order derivatives $\mathbf{x} = [\mathbf{r}; \phi; \dot{\mathbf{r}}; \dot{\phi}]\in \mathbb{R}^{8}$, where $\mathbf{r}\in \mathbb{R}^{3}$, $\phi \in \mathbb{R}$. 
\begin{align}\label{eq:dynamics}
    \dot{\mathbf{x}} &= A\mathbf{x} + B\mathbf{u},
\end{align}
where the control input $\mathbf{u} = [\mathbf{u}_{r}; u_{\phi}] \in \mathbb{R}^{4}$ is the acceleration. The system output is $\mathbf{y} = [\mathbf{r}; \phi]\in \mathbb{R}^{4}$. We denote velocity by $\mathbf{v} = [\dot{\mathbf{r}}; \dot{\phi}] \in \mathbb{R}^{4}$. Physical limits are given by minimum velocity and acceleration $\mathbf{v}_{\mathrm{min}},~\mathbf{a}_{\mathrm{min}}\in\mathbb{R}^{4}$, maximum velocity and acceleration $\mathbf{v}_{\mathrm{max}},~\mathbf{a}_{\mathrm{max}}\in\mathbb{R}^{4}$, 
respectively. $A = [\mathbf{0}, \mathbf{I}; \mathbf{0}, \mathbf{0}] \in \mathbb{R}^{8\times 8}$, $B = [\mathbf{0}; \mathbf{I}] \in \mathbb{R}^{8\times 4}$, 
with $\mathbf{0}, \mathbf{I} \in \mathbb{R}^{4\times 4}$ denoting the zero and identity matrices. At time $t_{0}$, we generate a trajectory $\mathbf{x}(t|t_{0})$ and control $\mathbf{u}(t|t_{0})$ over horizon $\tau$.

\subsection{Sensing model}
Each robot senses via an onboard camera aligned with the body frame, positive $x$-axis. 
We model the robot $i$'s sensing region $\mathcal{F}_i$ as a truncated spherical sector as shown in Fig.~\ref{fig:sensing_model}, limited by a maximum perception range $R_s > 0$, and a minimum safety distance from the other robots $D_s > 0$. Horizontal and vertical fields of view are given by angles $\beta_H, \beta_V \in [0, 2\pi)$, respectively. 
A neighbor's position is observable when inside the field of view. 
Robot $i$ measures the relative position of neighbor $j$ in its body frame, 
i.e., $\irj \!=\! \mathbf{r}_j \!-\! \mathbf{r}_i$. 
We model measurement uncertainty as a zero-mean multivariate Gaussian noise with covariance matrix $R_{\mathrm{m}} \!\in\! \mathbb{R}^{3\times 3}$.
\begin{figure}[t]
    \centering
    \subfloat[Robot sensing region]
    {\includegraphics[width=0.19\textwidth]{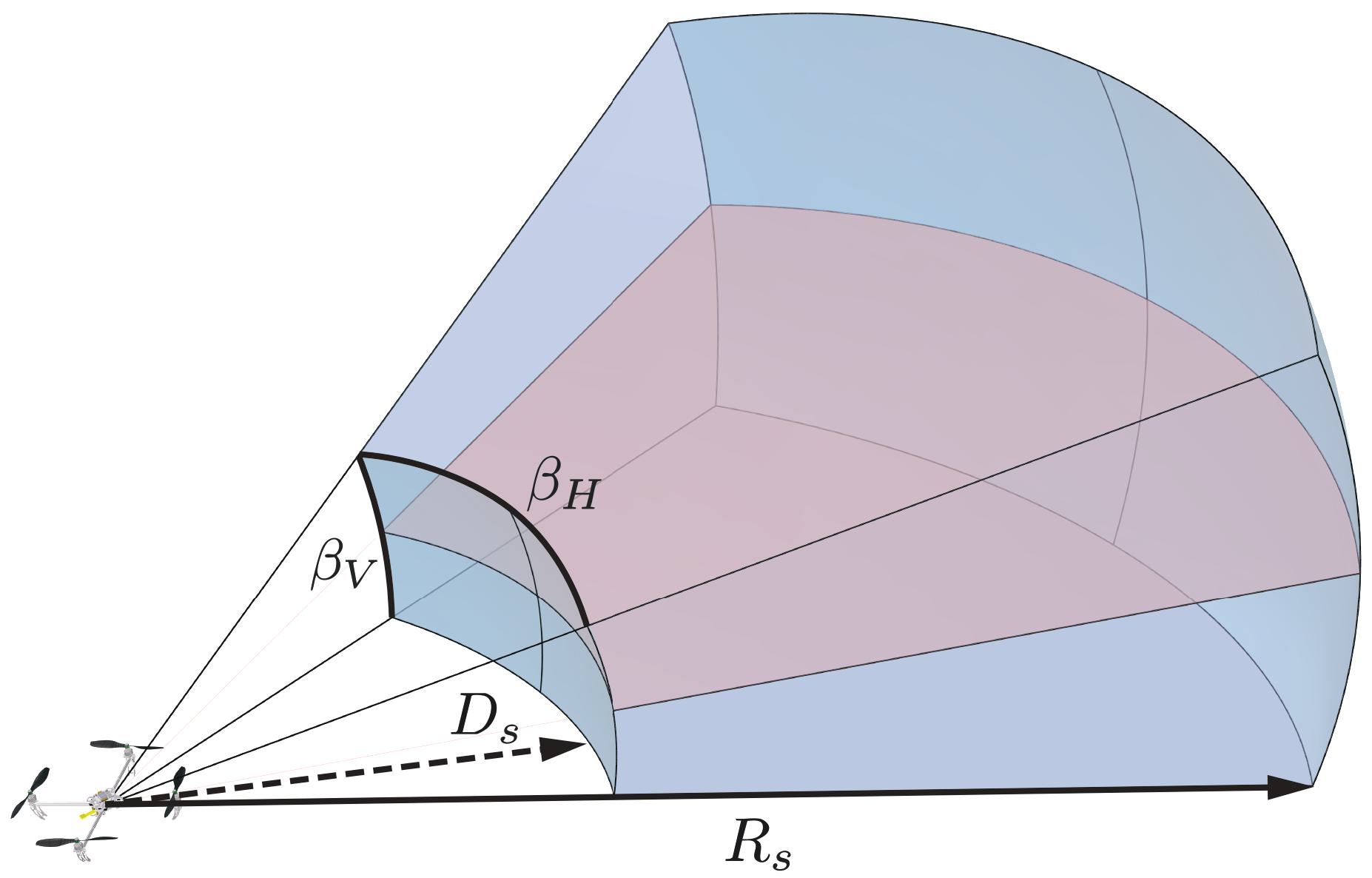}}
    \hspace{2em}
    \subfloat[Top view]
    {\includegraphics[width=0.10\textwidth]{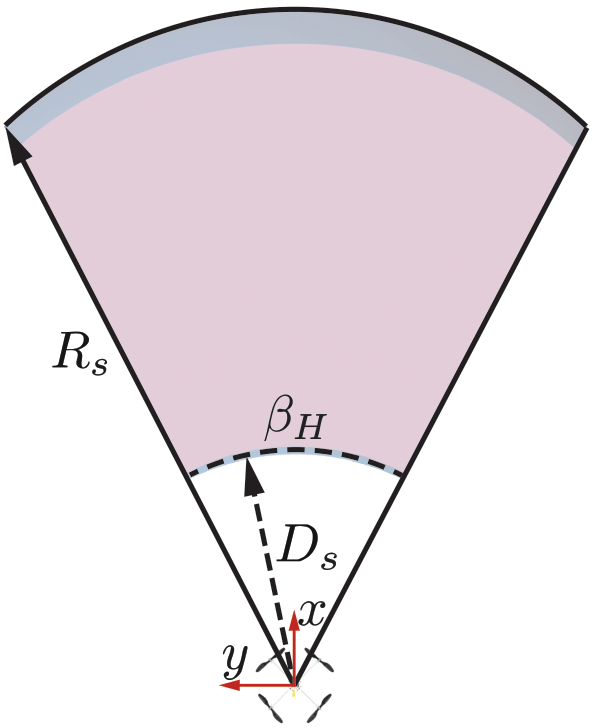}}
    \caption{The sensing region $\mathcal{F}$ of a robot is modeled as truncated spherical sector. $\beta_{H}$, $\beta_{V}$ are the horizontal and vertical field of view angles. $R_{s}$ is the sensing range and $D_{s}$ is the safety distance. 
    The blue volume (or red plane in 2D) is the region where the neighbor can be safely detected. 
    }
    \label{fig:sensing_model}
    \vspace{-2em}
\end{figure}
\section{HOCBFs Design}
We require a robot to maintain a safety distance, visual contact, and perception range with its neighbors. We formulate the following CBF for robot $i$ in the form 
$b(\irj) \geq 0$, $\forall j\in \mathcal{N}_{i}$. Here, we denote the neighbors of robot $i$ as $\mathcal{N}_{i}$ (we consider all the other robots; however, a sensing range can be enforced if desired). 
In this work, we only focus on 2D motion, which can be applied to ground robots or aerial vehicles that fly at the same altitude. 
Thus, the sensing region is a planar angular sector defined by $\beta_H$ (see Fig.~\ref{fig:sensing_model}b). 
The safety distance and range CBFs are defined as follows:
\begin{align}
    &b_{\mathrm{sr}}({}^i\mathbf{r}_j) = \begin{bmatrix}
        {}^ix_j & {}^iy_j \\
        -{}^ix_j & -{}^iy_j
    \end{bmatrix} \begin{bmatrix}
        {}^ix_j \\ {}^iy_j
    \end{bmatrix} + \begin{bmatrix}
        -D_{s}^{2} \\ R_s^2
    \end{bmatrix} , \forall j \in \mathcal{N}_{i},
\end{align}

We extended the field-of-view CBFs in~\cite{bertoncelli2024directed} to include $\beta_{H}\in [\pi,2\pi)$ and our CBFs are defined as follows:
\begin{align}
    &b_{\mathrm{fov}}({}^i\mathbf{r}_j) = \nonumber \\
    &\begin{cases}
    \begin{bmatrix}
        \tan(\beta_H/2) & 1 \\
        \tan(\beta_H/2) & -1
    \end{bmatrix} \begin{bmatrix}
        {}^ix_j \\ {}^iy_j
    \end{bmatrix} 
    ,  \!&\text{if } \beta_{H}\in[0, \pi)\\
    \begin{bmatrix}
        1 & 0
    \end{bmatrix} \begin{bmatrix}
        {}^ix_j \\ {}^iy_j
    \end{bmatrix} 
        ,  &\text{if } \beta_{H} = \pi\\
    \begin{bmatrix}
        \tan(\pi - \frac{\beta_H}{2}) & \text{sign}({}^iy_j)
    \end{bmatrix} \begin{bmatrix}
        {}^ix_j \\ {}^iy_j
    \end{bmatrix} , &\text{if } \beta_{H}\in(\pi, 2\pi)
    \end{cases}
\end{align}
\newtext{The CBFs $b_{\mathrm{sr}}({}^i\mathbf{r}_j)$ force robot $i$ to maintain a minimum safety distance $D_s$ and a maximum distance equal to the sensing range $R_s$ from robot $j$, $\forall j \in \mathcal{N}_i$. Meanwhile, $b_{\mathrm{fov}}({}^i\mathbf{r}_j)$ force the robot to keep the neighbor robot $j$ inside the 2D visual cone with amplitude $\beta_H$. Combining them, we obtain: 
}
\begin{align}\label{eq:hocbf_formulation}
    &b({}^i\mathbf{r}_j) = \left[b_{\mathrm{sr}}({}^i\mathbf{r}_j); b_{\mathrm{fov}}({}^i\mathbf{r}_j)\right] \geq 0, \forall j \in \mathcal{N}_{i},
\end{align}
\newtext{constraining robots to keep all their neighbors inside the field of view while moving.}
Note that $b$ has a relative degree $q=2$ with respect to system dynamics~\eqref{eq:dynamics}. Therefore, we use HOCBFs to guarantee constraint satisfaction. Choosing $\alpha_1(b({}^i\mathbf{r}_j)) = \gamma_1 b^{(2\mu+1)}({}^i\mathbf{r}_j) $ and $\alpha_2(\psi_1({}^i\mathbf{r}_j)) = \gamma_2\psi_1^{(2\mu+1)}({}^i\mathbf{r}_j)$, for $\mu \in \mathbb{N}$, we can rewrite~\eqref{eq:hocbf_def} as:
\begin{multline}\label{eq:hocbf_input}
    L_f^2b(\cdot) + L_gL_fb(\cdot)\mathbf{u} + (2\mu+1)\gamma_1b^{2\mu}(\cdot)L_fb(\cdot) \\+ \gamma_2(L_fb(\cdot) + \gamma_1b^{(2\mu+1)}(\cdot))^{(2\mu+1)} \geq 0.
\end{multline}


We choose $\alpha_1(\cdot)$ and $\alpha_2(\cdot)$ as odd power functions, which belong to extended class $\mathcal{K}$ functions. 
Therefore, the designed HOCBF~\eqref{eq:hocbf_formulation} is also a HOCLBF\newtext{~\cite{xiao2021hoclbf}} and brings the system back into the safe set $\mathcal{C}$ if not already within (see~\cite[Theorem 2]{xiao2021hoclbf}).
We use this property in our controller to tolerate constraint violations and enable the robot to regain visual contact with its neighbors after temporary tracking loss.

\section{Trajectory and Control Generation with Safety Certification}
\label{sec:mpc_cbf}
Our algorithm generates the continuous-time trajectory and control certified by control barrier functions, utilizing piecewise splines. 
Our optimization problem solves for the piecewise $h$-th order Bézier curves.  The trajectory is defined as the piecewise Bézier curves and their first-order derivatives. The control inputs $\mathbf{u}(t)$ are defined as their second derivatives.
We choose a sufficiently large $h$ to generate a smooth control $\mathbf{u}(t)$. 
To satisfy the visual contact requirement, we impose the HOCBF constraints in~\eqref{eq:hocbf_input}, for any given $t$ in the horizon. 
The general form of our problem is formulated as follows:
\begin{subequations}\label{eq:12gen}
\begin{IEEEeqnarray}{rCl'rCl}
\argmin_{\boldsymbol{\mathcal{U}}} 
&~& \mathcal{J}_{\mathrm{cost}} \label{eq:general_cost}\\
\text{s.t.} 
&~& \dot{\mathbf{x}}(t) = A\mathbf{x}(t) + B{\mathbf{u}(t)} \label{eq:model_constraint}\\
&~& \frac{d^j f(0)}{dt^j} = \frac{d^j \mathbf{r}(t_{0})}{dt^j} , ~\forall j \in\{0, \ldots, C\} 
\label{eq:general_initial_state_constraint}\\
&~& f \text{ continuous up to derivative } C \label{eq:general_continuity_constraint}\\
&~& \small{\begin{aligned} A^{\mathrm{cbf}}\mathbf{u}(t) + \boldsymbol{b}^{\mathrm{cbf}}({}^{i}\hat{\mathbf{r}}_{j}(t|t_{0})) \geq 0, ~&\forall t\!\in\! [t_{0}, t_{0}\!+\!\tau] \\ & \forall j \!\in\! \mathcal{N}_{i} \end{aligned}} \label{eq:general_qp_cbf}\\
&~& \mathbf{a}_{\mathrm{min}} \preceq \mathbf{u}(t) \preceq \mathbf{a}_{\mathrm{max}}, ~\forall t\!\in\! [t_{0}, t_{0}\!+\!\tau]\\
&~& \mathbf{v}_{\mathrm{min}} \preceq \mathbf{v}(t) \preceq \mathbf{v}_{\mathrm{max}}, ~\forall t\!\in\! [t_{0}, t_{0}\!+\!\tau],
\end{IEEEeqnarray}
\end{subequations}
where $\preceq$ stands for element-wise less than or equal to, 
$t_{0}$ is the current time stamp, and $C$ is the highest order of derivatives required for continuity. 
The constraint~\eqref{eq:general_qp_cbf} is equivalent to~\eqref{eq:hocbf_input}, where $A^{\mathrm{cbf}} = L_gL_fb$, $\boldsymbol{b}^{\mathrm{cbf}} = L_f^2b + (2\mu+1)\gamma_1b^{2\mu}L_fb + \gamma_2(L_fb + \gamma_1b^{(2\mu+1)})^{(2\mu+1)}$. 
Note, ${}^{i}\hat{\mathbf{r}}_{j}(t|t_{0}) = \hat{\mathbf{r}}_{j}(t_{0}) - \mathbf{r}_{i}(t)$, as we can only obtain the current estimation of the neighbor $\hat{\mathbf{r}}_{j}(t_{0})$ in a communication-denied setting.

\begin{theorem}
Consider the HOCBF in~\eqref{eq:hocbf_formulation} and the set $\mathcal{C}:=\mathcal{C}_1 \cap \mathcal{C}_2$. 
Let $\alpha_{1}$, $\alpha_{2}$ be differentiable extended class $\mathcal{K}$ functions. If $\mathbf{x}(t_0)\in \mathcal{C}$, the controller $\mathbf{u}(t)$ from~\eqref{eq:12gen}, $\forall t\in [t_{0},t_{0}+\tau]$ renders $\mathcal{C}$ forward invariant. Otherwise, $\mathbf{u}(t)$ from~\eqref{eq:12gen}, $\forall t\in [t_{0},t_{0}+\tau]$ stabilizes system~\eqref{eq:dynamics} towards the set $\mathcal{C}$. 
\end{theorem}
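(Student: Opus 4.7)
The plan is to split the statement according to its two hypotheses on $\mathbf{x}(t_{0})$ and, in each half, reduce to a result already recalled in Section~III. Throughout I would fix a feasible solution of~\eqref{eq:12gen} and reason pointwise in $t\in[t_{0},t_{0}+\tau]$. A preliminary regularity step: since $\mathbf{u}(t)$ is the second derivative of a piecewise $h$-th order Bézier spline and~\eqref{eq:general_continuity_constraint} enforces continuity up to order $C$, choosing $C\geq 2$ yields a piecewise-polynomial and continuous $\mathbf{u}(t)$ on the horizon, hence Lipschitz. This is the regularity hypothesis needed by both cited theorems.

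For the first case, $\mathbf{x}(t_{0})\in\mathcal{C}$, I would note that constraint~\eqref{eq:general_qp_cbf} is a rewriting of~\eqref{eq:hocbf_input} with $A^{\mathrm{cbf}}=L_{g}L_{f}b$ and $\boldsymbol{b}^{\mathrm{cbf}}$ collecting the drift and class-$\mathcal{K}$ terms. Any feasible $\mathbf{u}(t)$ therefore attains the supremum in~\eqref{eq:hocbf_def} pointwise for all $t\in[t_{0},t_{0}+\tau]$ and all $j\in\mathcal{N}_{i}$. Definition~\ref{def:hocbf} together with~\cite[Theorem~4]{xiao2021high} then delivers forward invariance of $\mathcal{C}=\mathcal{C}_{1}\cap\mathcal{C}_{2}$.

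For the second case, $\mathbf{x}(t_{0})\notin\mathcal{C}$, I would turn to the HOCLBF machinery of~\cite{xiao2021hoclbf}. The functions $\alpha_{i}(s)=\gamma_{i}s^{2\mu+1}$ selected in~\eqref{eq:hocbf_input} are odd, strictly increasing, and smooth, so they qualify as differentiable \emph{extended} class-$\mathcal{K}$ functions as demanded by the hypothesis. The same pointwise inequality enforced by~\eqref{eq:general_qp_cbf} therefore certifies $b$ as a HOCLBF, and~\cite[Theorem~2]{xiao2021hoclbf} yields asymptotic convergence of $\mathbf{x}(t)$ to $\mathcal{C}$, which matches the ``stabilizes towards $\mathcal{C}$'' language in the statement.

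The step I expect to be the main obstacle is not either reduction by itself but handling the stale estimate ${}^{i}\hat{\mathbf{r}}_{j}(t|t_{0})=\hat{\mathbf{r}}_{j}(t_{0})-\mathbf{r}_{i}(t)$ that appears inside~\eqref{eq:general_qp_cbf}, whereas the cited theorems are stated in terms of the true relative position $\mathbf{r}_{j}(t)-\mathbf{r}_{i}(t)$. The cleanest resolution is to read the theorem under the implicit assumption that $\hat{\mathbf{r}}_{j}(t_{0})\approx \mathbf{r}_{j}(t)$ over the horizon, either because the neighbor is effectively stationary on that timescale or because the snapshot is exact; otherwise a robust tightening of $\boldsymbol{b}^{\mathrm{cbf}}$ would be required to absorb the estimation drift, which is not present in the current formulation. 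A secondary, easier point is that the whole argument presupposes feasibility of~\eqref{eq:12gen}; if the program is infeasible, no Lipschitz $\mathbf{u}(t)$ is produced and both guarantees are void.
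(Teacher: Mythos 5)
Your proposal is correct and follows essentially the same route as the paper's proof: establish Lipschitz continuity of $\mathbf{u}(t)$ from the second-derivative-of-a-spline structure with sufficient continuity, then invoke the forward-invariance result for HOCBFs when $\mathbf{x}(t_0)\in\mathcal{C}$ and the HOCLBF stabilization result (via the odd-power extended class $\mathcal{K}$ choices of $\alpha_1,\alpha_2$) otherwise; the paper compresses both cases into a single appeal to its cited theorems where you spell them out separately. The caveat you raise about the stale estimate ${}^{i}\hat{\mathbf{r}}_{j}(t|t_{0})$ versus the true relative position is a real gap that the paper's own proof also leaves unaddressed, so it does not distinguish your argument from theirs.
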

\begin{proof}
$\mathbf{u}(t)$ is Lipschitz continuous as it is defined as the second-order derivative of the Bézier curve with sufficient continuity. 
The constraint~\eqref{eq:model_constraint} requires the state transition to obey the system model in~\eqref{eq:dynamics}. The constraints~\eqref{eq:general_qp_cbf} render the system safe in the horizon or stabilize the system to a safe set $\mathcal{C}$ following directly from Theorem 1.  
\end{proof}

Solving the above optimization, however, is intractable, as~\eqref{eq:general_qp_cbf} imposes constraints for all $t$ in continuous time, yielding an infinite number of constraints. Instead, we propose a discrete optimization scheme to approximate the solution, depicted in Fig.~\ref{fig:mpc-cbf}. 
Our approach imposes the HOCBF constraints only at time stamps sampled at fixed intervals over the horizon. The trajectory \newtext{is replanned in real-time and the most recent optimized trajectory} is executed only \newtext{up to} the first sampled time step. Since our optimization scheme acts similarly to an MPC with continuous-time control inputs, we name our algorithm model predictive control with control barrier functions, or MPC-CBF for short. 
\begin{figure}[t]
    \centering
    \includegraphics[width=0.5\linewidth]{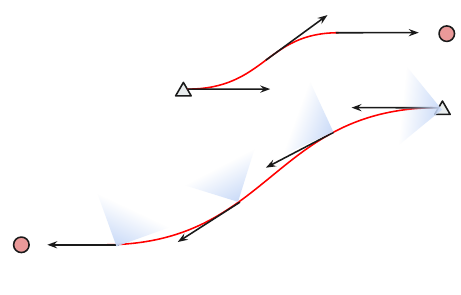}
    \caption{Robot navigates to its goal (red dot) with predicted field of views (blue triangles). MPC-CBF imposes constraints at sampled steps.} 
    \label{fig:mpc-cbf}
    \vspace{-2em}
\end{figure}
\subsection{Trajectory and Control Prediction Model}
We introduce the notation $\hat{(\cdot)}(k|t_{0})$, which represents the prediction of $(\cdot)(k|t_{0})$, given information at time $t_{0}$ and horizon $k \in \{0, \cdots, K-1\}$, where $(K - 1)\delta = \tau$. Here $\delta$ is the duration of each discrete time step. 
The prediction of system output $\hat{\mathbf{y}}(k|t_{0})$ is the optimized piecewise Bézier curve resulting from~\eqref{eq:12gen}; its first and second-order derivatives $\hat{\mathbf{v}}(k|t_{0})$ and $\hat{\mathbf{u}}(k|t_{0})$ can be computed in closed form.
By definition, the predicted trajectory $\hat{\mathbf{x}}(k|t_{0}) = [\hat{\mathbf{y}}(k|t_{0}); \hat{\mathbf{v}}(k|t_{0})]$ is the solution of~\eqref{eq:dynamics} given control inputs $\hat{\mathbf{u}}(k|t_{0})$.

\subsection{HOCBF Constraints and Relaxation}
To satisfy the safety requirements, we approximate HOCBF constraints in~\eqref{eq:general_qp_cbf} using a sampling-based approach. 
We constrain $\hat{\mathbf{u}}(k|t_{0})$ at discrete time steps over horizon by
\begin{align}
    A^{\mathrm{cbf}}\hat{\mathbf{u}}(k|t_{0}) + \boldsymbol{b}^{\mathrm{cbf}}({}^{i}\hat{\mathbf{r}}_{j}(k|t_{0})) &\geq 0, ~\forall j \in \mathcal{N}_{i} \nonumber\\ &\forall k\in \{0, \ldots, K-1\}. 
    \label{eq:hocbf}
\end{align}
As samples increase, constraints~\eqref{eq:hocbf} approach HOCBF constraints in~\eqref{eq:general_qp_cbf}. 
Note that the prediction $\hat{\mathbf{y}}(k|t_{0})$ depends on the decision variables, hence ${}^{i}\hat{\mathbf{r}}_{j}(k|t_{0})$ does too. The constraint~\eqref{eq:hocbf} is nonlinear, due to the nonlinearity of $A^{\mathrm{cbf}}$ and $\boldsymbol{b}^{\mathrm{cbf}}$. 
In Sec.\ref{sec:SQP}, we propose a linear approximation of the constraint and solve the problem with SQP. 

More neighbors increase the number of constraints, which leads to infeasibility. We relax~\eqref{eq:hocbf} with slack variables for distant robots, which do not pose a danger of collisions. We define slack variables $\epsilon_j\geq 0$, for $j\in \mathcal{N}_{i}$ (see Sec.~\ref{sec:cost_functions}).
\subsection{Collision Avoidance Constraints}
The safety distance HOCBF cannot guarantee collision avoidance in the horizon, as the robot can only estimate the current relative position of its neighbors $^{i}\hat{\mathbf{r}}_{j}(t_{0})$ without knowing their plans. 
We use a separating hyperplane approach, similar to~\cite{pan2025hierarchical}, to guarantee collision avoidance in belief space. 
A function $L(\mathcal{A}, \mathcal{B})$ computes a separating half-space $\hat{\mathcal{H}}_{r}\!\! :=\!\! \left\{\mathbf{r}\in \mathcal{W}\mid\boldsymbol{w}^{\top}_{r}\mathbf{r} + b_{r} \leq 0\right\}$, where $\boldsymbol{w}_{r}$ and $b_{r}$ are the weights and bias of the half-space, respectively, and $\mathcal{A}$ and $\mathcal{B}$ are the convex hulls representing the robots. We compute Voronoi-cell separation between $\mathbf{r}_{i}$ and $\mathbf{r}_{j}$ as $\hat{\mathcal{H}}_{r}$. By buffering the half-space by an offset $b_{r}^{'} = b_{r} + \mathrm{max}_{\boldsymbol{y}\in \mathcal{R}(\mathbf{0})} \boldsymbol{w}_{r}^{\top} \boldsymbol{y}$, we obtain that the safety corridor consists of $\mathcal{H}_{r}$ for robot $i$. The Bézier curve $f_{i}$ generated at the negative side of $\mathcal{H}_{r}$ guarantees collision avoidance with its neighbor's belief. 
We can write this constraint in the form
\begin{align}
    A^{\mathrm{col}}_{i}\boldsymbol{u}_{i,j} + \boldsymbol{b}^{\mathrm{col}}_{i} \leq 0, ~&\forall i \in \left\{ 0, \ldots, P-1 \right\} \nonumber \\
    &\forall j \in \left\{ 0, \ldots, h \right\}.
\end{align}
\subsection{Output and Derivatives Continuity}
To guarantee continuity of the system output and its derivatives, we need to impose continuity between the splines, thus adding the following constraints,
\begin{align}
    \frac{d^j f_{i}\left(\tau_{i}\right)}{d t^j}=\frac{d^j f_{i+1}(0)}{d t^j}, ~& \forall i \in\{0, \ldots, P-2\} \nonumber \\ 
    &\forall j \in\{0, \ldots, C\}.
\end{align}

\subsection{Physical Limits}
We require limits on the derivatives due to physical constraints.  
The derivatives of Bézier curves are confined within the convex hull of the derivative's control points. 
This approach, however, is overly conservative~\cite{mercy2017spline}. In~\cite{honig2018trajectory}, the duration of the Bézier curve is iteratively rescaled until the physical constraints are satisfied. 
Inspired by~\cite{luis2020online}, we propose an approach that leverages our discrete optimization scheme. We bound the values of 
$\hat{\mathbf{v}}(k|t_{0})$, $\hat{\mathbf{u}}(k|t_{0})$ in the horizon, 
\begin{align}
    \mathbf{v}_{\mathrm{min}} &\preceq \hat{\mathbf{v}}(k|t) \preceq \mathbf{v}_{\mathrm{max}}, ~\forall k\in \{0, \ldots, K-1\},\\
    \mathbf{a}_{\mathrm{min}} &\preceq \hat{\mathbf{u}}(k|t) \preceq \mathbf{a}_{\mathrm{max}}, ~\forall k\in \{0, \ldots, K-1\}.
\end{align}

\subsection{Cost Functions}
\label{sec:cost_functions}
We optimize the predicted trajectory and control inputs considering different objectives according to the task. 
\subsubsection{Goal Cost}
The trajectory should navigate the robot towards its goal $\mathbf{y}_{d}\in \mathbb{R}^{4}$. We penalize the squared distance between the last $\kappa$ sampled predictions $\hat{\mathbf{y}}(k|t_{0})$ and the goal,
\begin{align}
    \mathcal{J}_{\mathrm{goal}} = \sum_{k = K-\kappa}^{K-1} \omega_{k}\left\Vert \hat{\mathbf{y}}(k|t_{0}) - \mathbf{y}_{d}\right\Vert_{2}^{2},
\end{align}
where $\omega_{k}$ is the weight for $k$-th sample. 

\subsubsection{Control Effort Cost} We minimize the weighted sum of the integral of the square of the norm of derivatives, 
\begin{align}
    \mathcal{J}_{\mathrm{effort}} = \sum_{j=1}^{C} \theta_{j} \int_{t_{0}}^{t_{0}+\tau} \left\Vert\frac{d^{j}}{dt^{j}} f(t)\right\Vert_{2}^{2} \, dt,
\end{align}
where $\theta_{j}$ is the weight of the order of derivatives. 

\subsubsection{Priority Cost} 
We prioritize tightening the HOCBF constraints for the nearest neighbors to maintain visual contact and prevent impending collisions. 
We derive a confidence ellipsoid ${\mathcal{R}}^{95}_{j}$ containing the real position $\mathbf{r}_j$ with $95\%$ probability from the particle filter. We find the distance $d_{ij}$ between robot $i$ and ${\mathcal{R}}^{95}_{j}$ following the solution in~\cite{catellani2023distributed}.
Sorting  neighbors based on the distance $d_{ij}$ (from the closest to the farthest one), we obtain an ordered set $\overline{\mathcal{N}}_i$, and prioritize the satisfaction of the HOCBF constraints on robots that are believed to be closer.

Priority assignment is achieved by adding slack variables $\epsilon_j$ with exponentially decaying weights $\xi_j = \Omega \cdot \gamma_s^{j}$ as a cost function, 
where $\Omega \in \mathbb{R}_{>0}$ is the cost factor and $\gamma_s \in (0,1)$ is the decay factor. Therefore, the cost can be defined as
\begin{equation}
    \mathcal{J}_{\mathrm{prior}} = \sum_{j\in\overline{\mathcal{N}}_{i}} \xi_j\epsilon_j.
\end{equation}
Slack variables let robot $i$ temporarily lose visual contact with a distant neighbor $j$, but reestablish it 
when the uncertainty grows and the ellipsoid ${\mathcal{R}}^{95}_{j}$ is close. 

\section{Solving MPC-CBF Optimization}
\label{sec:SQP}
As mentioned in Sec.~\ref{sec:mpc_cbf}, the HOCBF constraints in~\eqref{eq:hocbf} are nonlinear. 
We propose a linear surrogate of HOCBF constraints and use the SQP to solve the proposed problem with a QP solver. 
The SQP iteratively solves MPC-CBF. In each iteration, we use the prediction from the previous QP as the surrogate of the states to compute $A^{\mathrm{cbf}}$ and $\boldsymbol{b}^{\mathrm{cbf}}$. This decouples the dependency between the prediction and decision variables, so that $A^{\mathrm{cbf}}$ and $\boldsymbol{b}^{\mathrm{cbf}}$ can be treated as constants. 
The initial QP is solved with HOCBF constraint only on the observable state at time $t_{0}$ and predicts $\hat{\mathbf{x}}_{0}(k|t_{0})$ and $\hat{\mathbf{u}}_{0}(k|t_{0})$. Here, the subscription indicates the QP iteration index. However, the predicted trajectory and control inputs do not necessarily satisfy HOCBF constraints in the horizon. 
For the $\nu$-th QP iteration, we substitute ${}^{i}\mathbf{r}_{j}(k|t_{0})$ with the predicted ${}^{i}\hat{\mathbf{r}}_{j,\nu-1}(k|t_{0})$ from the previous QP, for $\nu = 1, \ldots, \mathcal{V}-1$. The $\nu$-th QP is formulated as follows:
\begin{subequations}
\begin{IEEEeqnarray}{rCl'rCl}
\argmin_{\boldsymbol{\mathcal{U}}} 
&~& \mathcal{J}_{\mathrm{effort}} + \mathcal{J}_{\mathrm{goal}} + \mathcal{J}_{\mathrm{prior}} \label{QPcost}\\
\text{s.t.} 
&~& \frac{d^j f(0)}{dt^j} = \frac{d^j \mathbf{r}}{dt^j} , ~\forall j \in\{0, \ldots, C\} \label{QPconst:init_state}\\
&~& \begin{aligned} \frac{d^j f_{i}\left(T_{i}\right)}{d t^j}=\frac{d^j f_{i+1}(0)}{d t^j}, ~& \forall i \in\{0, \ldots, P\!-\!2\} \\ & \forall j \in\{0, \ldots, C\}\end{aligned} \label{QPconst:continuity}\\
&~& \begin{aligned} A^{\mathrm{col}}_{i}\boldsymbol{u}_{i,j} + \boldsymbol{b}^{\mathrm{col}}_{i} \leq 0, ~& \forall i \in\{0, \ldots, P-1\} \\ & \forall j \in\{0, \ldots, h\}\end{aligned} \label{QPconst:safety_corridor}\\
&~& \begin{aligned} A^{\mathrm{cbf}}\hat{\mathbf{u}}(k|t_{0}) &+ \boldsymbol{b}^{\mathrm{cbf}}(^{i}\hat{\mathbf{r}}_{j,\nu-1}(k|t_{0})) + \epsilon_j \geq 0, \\ &\forall j\in \mathcal{N}_{i},~ \forall k\in \{0, \ldots, K-1\} 
\end{aligned} \label{QPconst:cbf}\\
&~&\mathbf{v}_{\mathrm{min}} \preceq \hat{\mathbf{v}}(k|t_{0}) \preceq \mathbf{v}_{\mathrm{max}}, \forall k\!\!\in\!\! \{0, \ldots, K\!-\!1\}\\
&~&\mathbf{a}_{\mathrm{min}} \preceq \hat{\mathbf{u}}(k|t_{0}) \preceq \mathbf{a}_{\mathrm{max}}, \forall k\!\!\in\!\! \{0, \ldots, K\!-\!1\}\\
&~& \epsilon_{j} \geq 0, ~\forall j\in \mathcal{N}_{i}.
\end{IEEEeqnarray}
\end{subequations}
\normalsize
Note that the robot estimates neighbors' positions only at $t_{0}$. Adding visual constraints for the entire horizon based on this estimate leads to an overly conservative plan. Instead, we satisfy HOCBF constraints only up to $K_{r}$ steps in the horizon.

\section{Simulation Results}
We define two sets of instances. In ``Circle" instances, robots are initialized uniformly on a circle with antipodal goals; their start and goal headings face the circle's center. In ``Formation" instances, robots are initialized in grids and demanded to move forward;  start and goal headings are set to $0$ yaw. 

To reflect the uncertainty in the system dynamics, Gaussian noise is added to the system output and velocity, i.e., $\mathbf{y}(k|t_{0}) \sim \mathcal{N}\left(\hat{\mathbf{y}}(k|t_{0}), \sigma_{\mathbf{y}}^{2}\mathbf{I}\right)$, $\mathbf{v}(k|t_{0})\sim \mathcal{N}\left(\hat{\mathbf{v}}(k|t_{0}), \sigma_{\mathbf{v}}^{2}\mathbf{I}\right)$, where $\mathcal{N}(\boldsymbol{\mu},\sigma\mathbf{I})$ denotes a multivariate Gaussian distribution with mean $\boldsymbol{\mu}$ and a diagonal covariance matrix $\sigma\mathbf{I}$. We set $\sigma_{\mathbf{y}}\!=\!0.001$, and $\sigma_{\mathbf{v}}\!=\!0.01$. In this work, we fix the height of the robots. We set different $\beta_{H}$ to demonstrate the property of our algorithm. We limit the acceleration in range $[-10, 10]\unit{m/s^{2}}$ in the x-y plane, and velocity in range $[-3, 3]\unit{m/s}$ for ``circle" instances and $[-0.5, 0.5]\unit{m/s}$ for ``formation" instances. We set the yaw acceleration and yaw rate limits as $[-\pi, \pi]\unit{rad/s^{2}}$ and $[-\frac{5}{6}\pi, \frac{5}{6}\pi]\unit{rad/s}$ respectively. To expedite the computation and respect the visual contact constraints, we set $K_{r}=2$, $\mathcal{V} = 2$ in the SQP solver. We set the number of pieces $P=3$ for the piecewise spline, the degree of Bézier curves $h=3$ with duration $\tau_{i}=0.5\unit{s}$, for $i=1,2,3$, and require the highest order of continuity $C=3$. In the MPC-CBF algorithm, we set the discrete sample interval $\delta=0.1\unit{s}$. \newtext{In simulation, the replanning happens every $0.1\unit{s}$.} For the particle filtering, we set the number of particles to $N_{p}=100$ (initialized uniformly randomly in the workspace), the process covariance to $0.25\mathbf{I}$, the measurement covariance $R_{\mathrm{m}}$ to  $0.05\mathbf{I}$, and the penalty factor for particles inside the field of view to $\varepsilon = 0.1$. The cost factor of slack variables is $\Omega = 1000$. The collision shape of the robot is defined as an axis-aligned bounding box in range $[-0.2, 0.2]\unit{m}$ for both x-y dimensions. 
\newtext{For the baseline, we implemented the controller from~\cite{catellani2023distributed}, extending it to an HOCBF with a double integrator under the same velocity and acceleration limits; a PD controller provides the desired input, and the control loop runs every $0.1\unit{s}$ in simulation.
}
\subsection{Simulation in Circle Instances}
We use the following criteria for evaluation:\newline
    \textbf{Success Rate}: success is defined as all robots reach their goal areas and stay within them without collisions.\newline
    \textbf{Makespan}: time at which the last robot reaches its goal area.\newline
    \textbf{Percentage of Neighbors in FoV}: average percentage of neighbors the robot keeps in visual contact over the makespan.\newline 
Goals may not satisfy the visual-contact requirement. Our method compensates for visual contact and thus could lead to deviations from the goal. Thus, we consider the robot to complete its task when it reaches a goal area and stays within. 

We set cost coefficients $\omega_{k}=10$ for $k = K-\kappa,\ldots,K-1$, $\theta_{j} = 1$ for $j=1\ldots C$, and $\kappa=3$. The snapshots in Fig.~\ref{fig:sim_circle} are typical routing of our control strategy in the ``circle" instance with $5$ robots and a $\beta_{H}=\frac{2}{3}\pi$ field of view. 
\newtext{As a demonstration of controller robustness,} note that the robot colored with green trajectory, when it loses visual contact with neighbors in Fig.~\ref{fig:sim_circle}(a), changes its position and heading in Fig.~\ref{fig:sim_circle}(b) to regain detection. 
The sensitivity of heading adjustments is controlled by the slack variable decay factor $\gamma_{s}$. A small $\gamma_{s}$ prioritizes tracking the closest neighbor in the belief space, resulting in more aggressive heading responses. A large $\gamma_{s}$ tends to track more neighbors, leading to less aggressive heading responses. An aggressive heading response can lead to an inefficient strategy due to frequent heading changes. An insensitive heading tends to overlook impending collisions, leading to actual collisions.
\begin{figure}[tb]
    \centering
    \subfloat[Time = 4.4\unit{s}]{\includegraphics[width=0.11\textwidth]{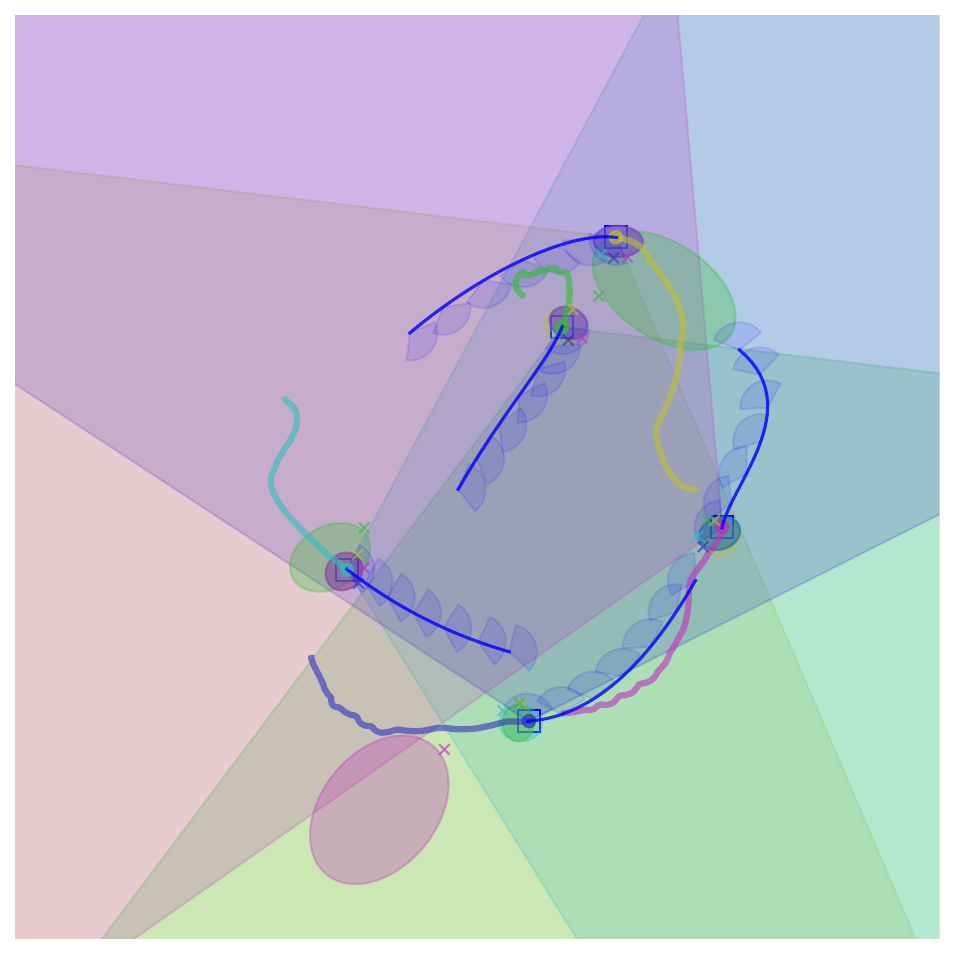}}
    \subfloat[Time = 6.8\unit{s}]{\includegraphics[width=0.11\textwidth]{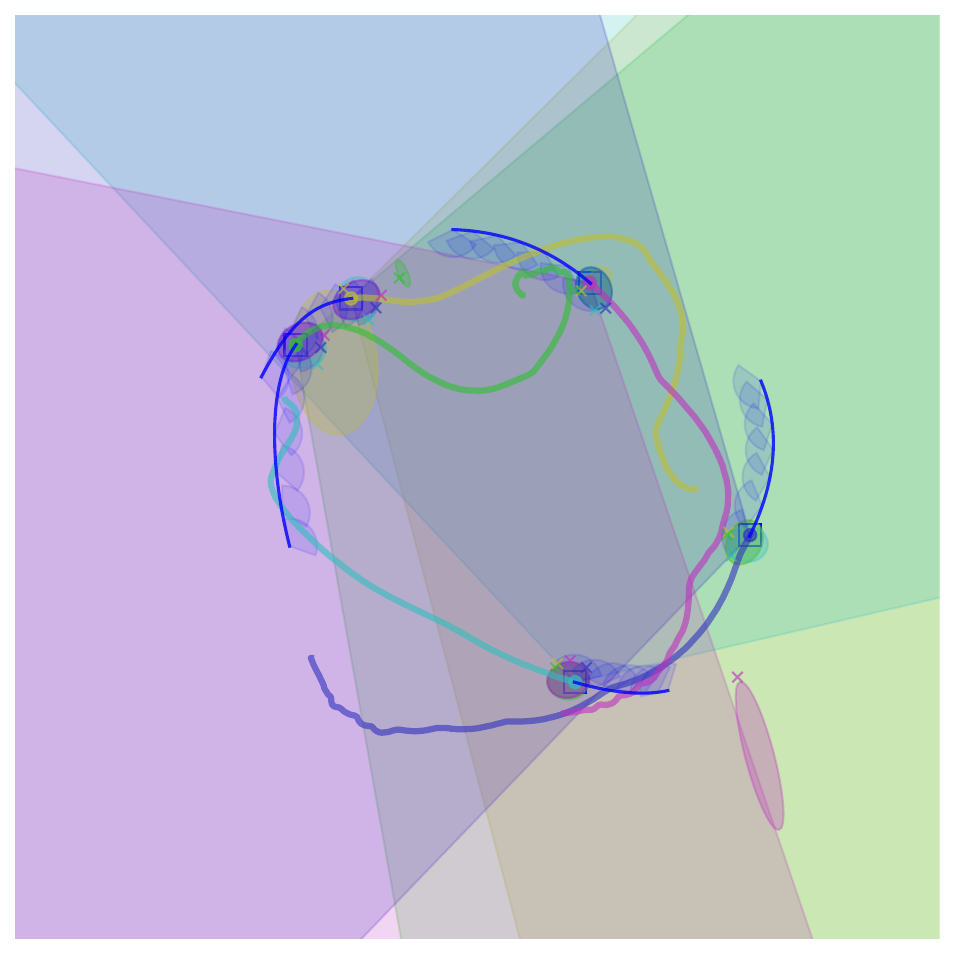}}
    \subfloat[Time = 11.0\unit{s}]{\includegraphics[width=0.11\textwidth]{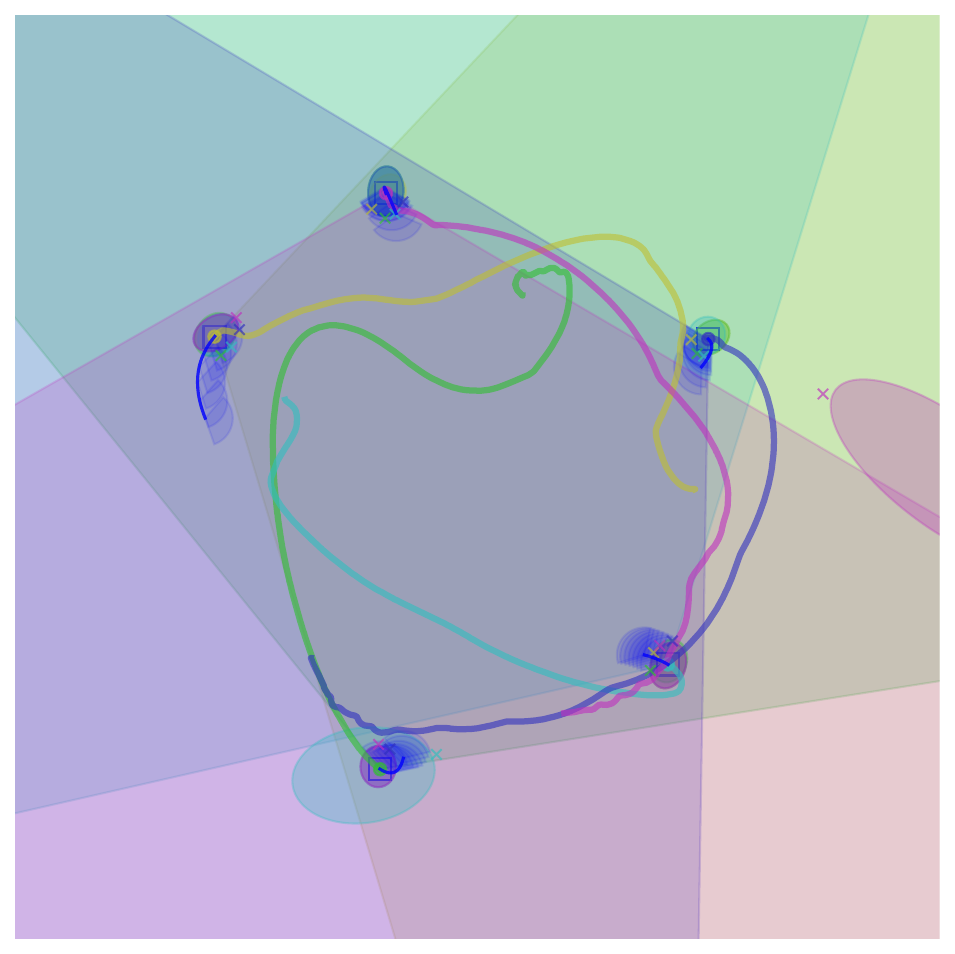}}
    \subfloat[Time = 15.0\unit{s}]{\includegraphics[width=0.11\textwidth]{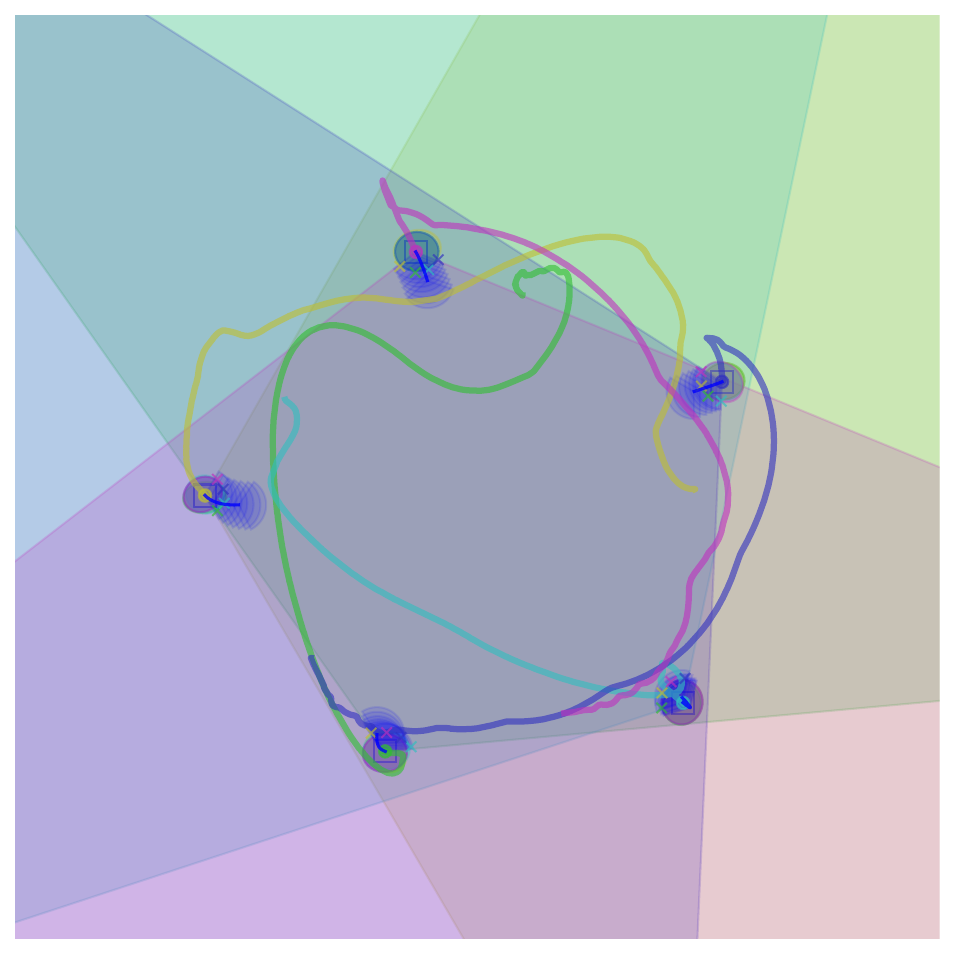}}
    \caption{Snapshots for 5 robots in the circle instance. The ovals are 95\% confidence ellipsoids of estimation (the source of estimations is indicated by colors). The predicted output is depicted as blue curves and purple field of views. The path is shown as a solid line.}
    \label{fig:sim_circle}
    \vspace{-1em}
\end{figure}
\begin{figure}[tb]
    \centering
    {\includegraphics[width=0.44\textwidth]{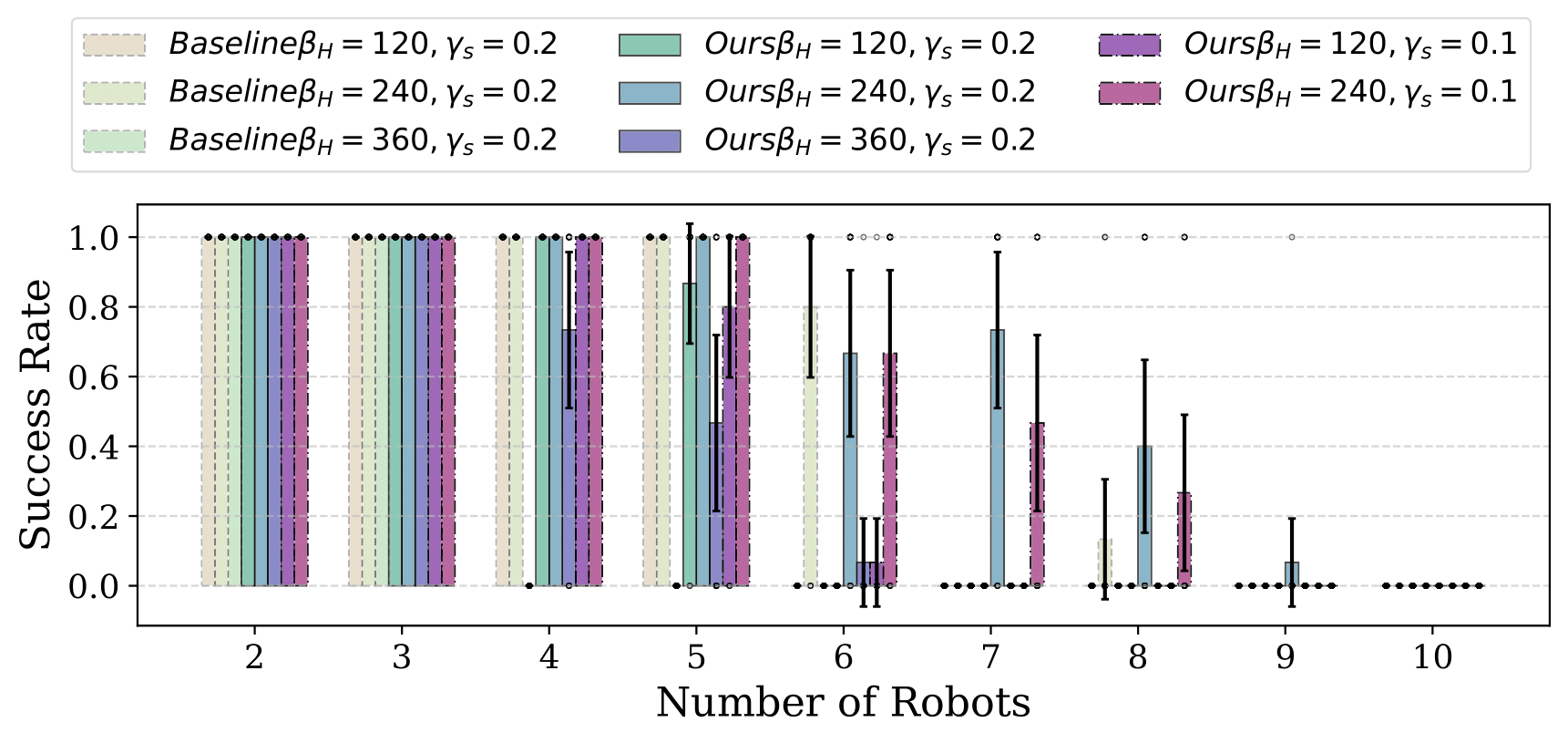}}\\
    {\includegraphics[width=0.44\textwidth]{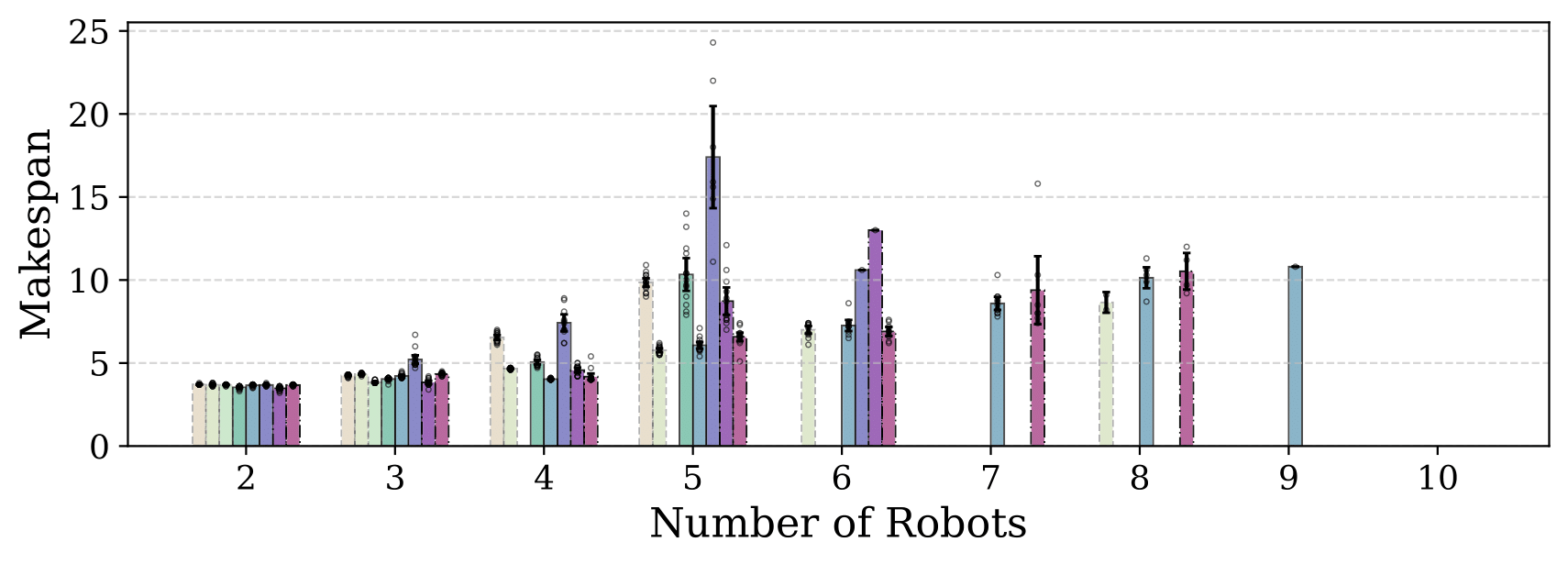}}\\
    {\includegraphics[width=0.44\textwidth]{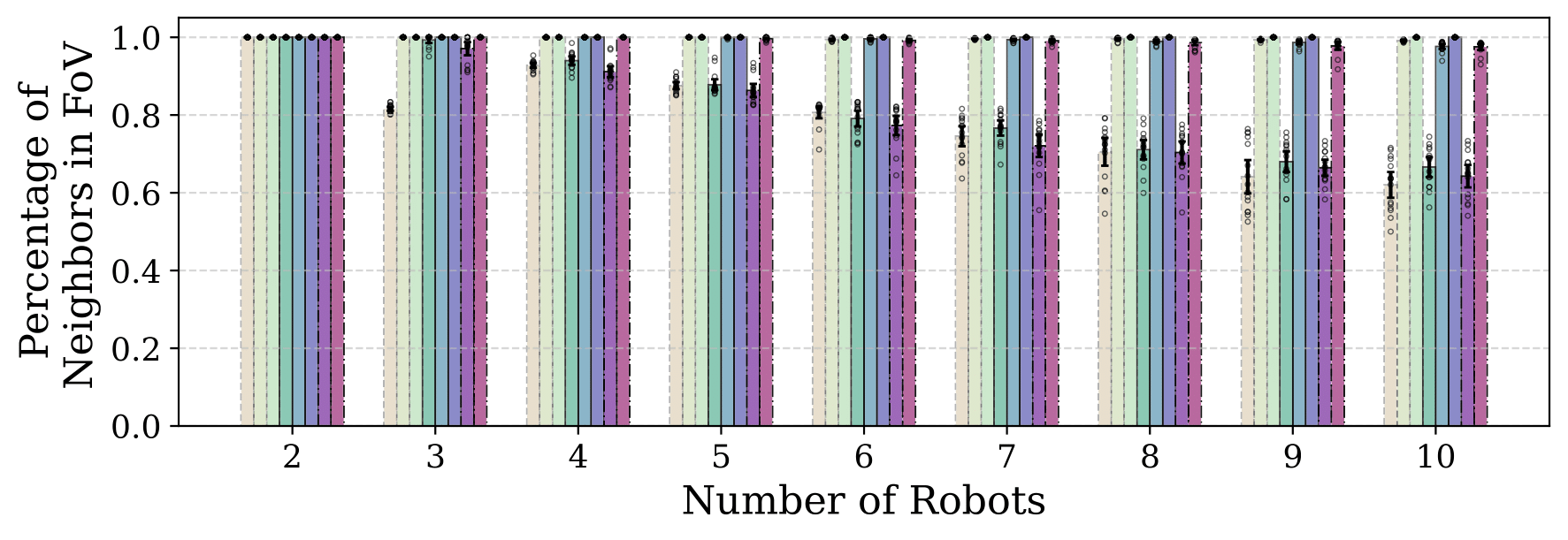}}
    \caption{Performance of our algorithm across $\beta_H$, $\gamma_s$, and different robot counts in ``circle'' instances. Bars show means, error bars show 95\% confidence intervals over 15 trials.}
    \label{fig:circle_quantity}
    \vspace{-2em}
\end{figure}

Figure~\ref{fig:circle_quantity} reports  quantitative performance of our control strategy with different $\beta_{H}$ in $[\frac{2}{3}\pi, \frac{4}{3}\pi, 2\pi]$ and slack variable decay factors $\gamma_{s}$ in $[0.1, 0.2]$. Note that, $\beta_{H}=2\pi$ always satisfies the field-of-view constraints. We show \textit{task success rate}, \textit{makespan} (excluding failure instances), and \textit{percentage of neighbors in FoV}. 
We notice that for a small number of robots (fewer or equal to $6$ robots in the ``Circle" instance), our controller has a similar success rate compared to the baseline. As we increase the number of robots, our algorithm outperforms the baseline with different $\beta_{H}$ and achieves successful executions when the baseline failed to complete the task at all. 
The reason is that the baseline, as a reactive controller, responds more aggressively to the imminent collisions and does not maintain a large safety distance. As a result, it is more sensitive to estimate uncertainty and exposes robots to a higher risk of collision once the neighbor detections are lost. For a small $\beta_{H}$, the control challenge is to maintain neighbors in the field of view, thus providing the latest estimation to avoid collision, while reaching the desired goal. As the field of view increases, maintaining visual contact with neighbors becomes easier, and robots tend to take the shortest path, leading to a decrease in the \textit{makespan}. However, it leads to 
deadlocks and potential collisions due to uncertainty in the estimation. 
The challenge of deadlocks in multi-robot planning falls outside the current scope. Modern MAPF-based path/trajectory planning addresses deadlock problems even in large-scale operations~\cite{pan2024hierarchical, pan2025hierarchical} \newtext{and limited communication scenarios~\cite{maoudj2024improved}}. 
Despite the drop in \textit{task success rate}, we note the \textit{percentage of neighbors in FoV} remains above $60\%$ as we scale up the number of robots even with $\beta_{H}=\frac{2}{3}\pi$. Our control strategy maintains the same level of visual contact with neighbors compared to the baseline controller while improving the success rate. \newtext{Additionally, we report that less aggressive heading adjustments, i.e., a larger $\gamma_s$, prevent robots from repetitively searching for neighbors, resulting in a higher success rate in ``Circle" instances. 
}
\subsection{Simulation in Formation Instances}
In ``formation" instances, we initialize all robots in grids $1\unit{m}$ apart in the x-y direction. All robots are initialized with $0$ yaw. Goals are $12\unit{m}$ to the right of the starts, each with a yaw of $0$. We set the cost coefficients $\omega_{k}=300$ for $k = K-\kappa,\ldots,K-1$, $\theta_{j} = 1$ for $j=1\ldots C$, and $\kappa=3$. In Fig.~\ref{fig:sim_formation}, we show a typical result of MPC-CBF on $4$ robots. 
Robots form and maintain visual contact with neighbors at all times, demonstrating the controller robustness. 
We summarize the quantitative results in Fig.~\ref{fig:formation_quantity}. Since robots move in the same direction, unlike ``Circle" instances, tasks result in fewer collisions and deadlocks. We notice an improved scalability and success rate compared to ``Circle" instances. Overall, our controller outperforms the baseline across different numbers of robots. With a small field of view, the baseline satisfies the field-of-view constraints while compromising the goal-reaching capability. In contrast, our control strategy retains goal-reaching capability while maintaining visual contact with neighbors. With an omnidirectional field of view, the baseline suffers from collision due to estimation uncertainty as the number of robots increases. Increasing the field of view improves the success rate for both the baseline and our approach. In addition, a larger $\gamma_{s}$ works better for ``formation" instances. The \textit{makespan} does not show significant changes with different $\beta_{H}$ or $\gamma_{s}$, mainly because the task is less challenging regarding navigation (the \textit{makespan} metric is omitted due to the space limit). The \textit{percentage of neighbors in FoV} remains above $60\%$ even with $\beta_{H}=\frac{2}{3}\pi$. Our controller maintains equivalent visual contact quality without compromising the goal-reaching capability.
\newtext{We report the runtime of the proposed perception control stack with different numbers of robots in Table~\ref{table:runtime}.
\begin{figure}[tb]
    \centering
    \subfloat[Time = 0.0\unit{s}]{\includegraphics[width=0.14\textwidth]{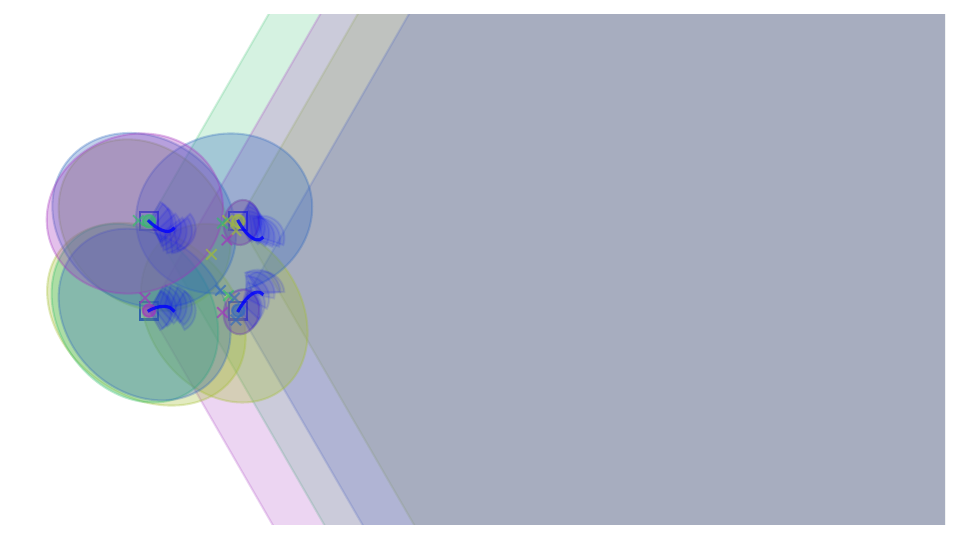}}
    \subfloat[Time = 10.0\unit{s}]{\includegraphics[width=0.14\textwidth]{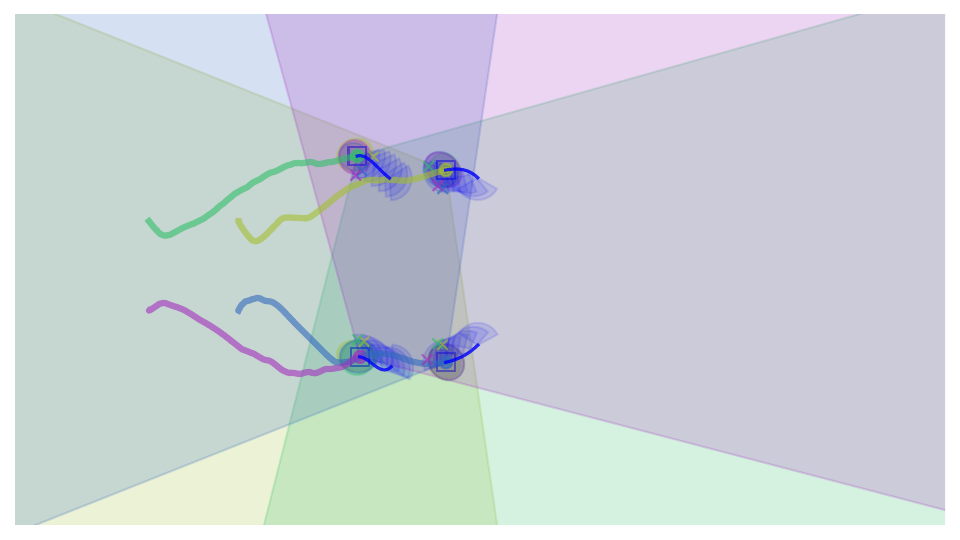}}
    \subfloat[Time = 27.0\unit{s}]{\includegraphics[width=0.14\textwidth]{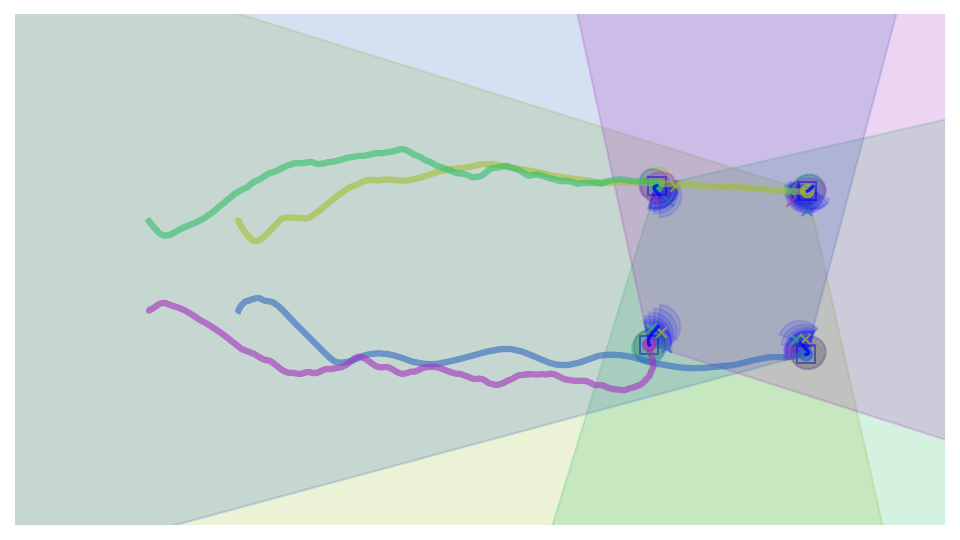}}
    \caption{4 robots navigating in a formation instance. Their start and goal yaws are set as 0. The ovals are 95\% confidence ellipsoids of estimation. The robot forms and maintains visual contact with its neighbors.}
    \label{fig:sim_formation}
    \vspace{-1em}
\end{figure}
\begin{figure}[tb]
    \centering
    {\includegraphics[width=0.44\textwidth]{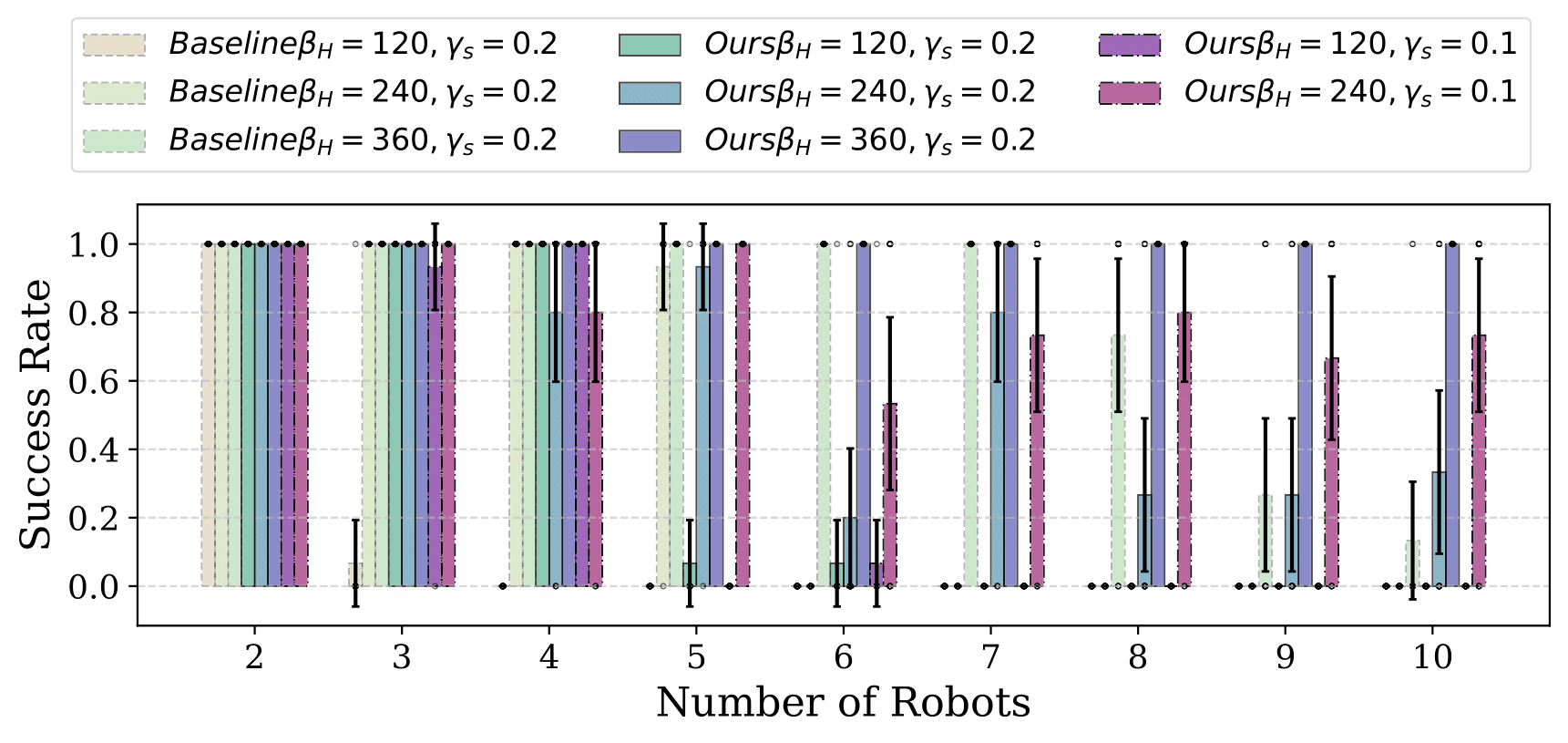}}\\
    {\includegraphics[width=0.44\textwidth]{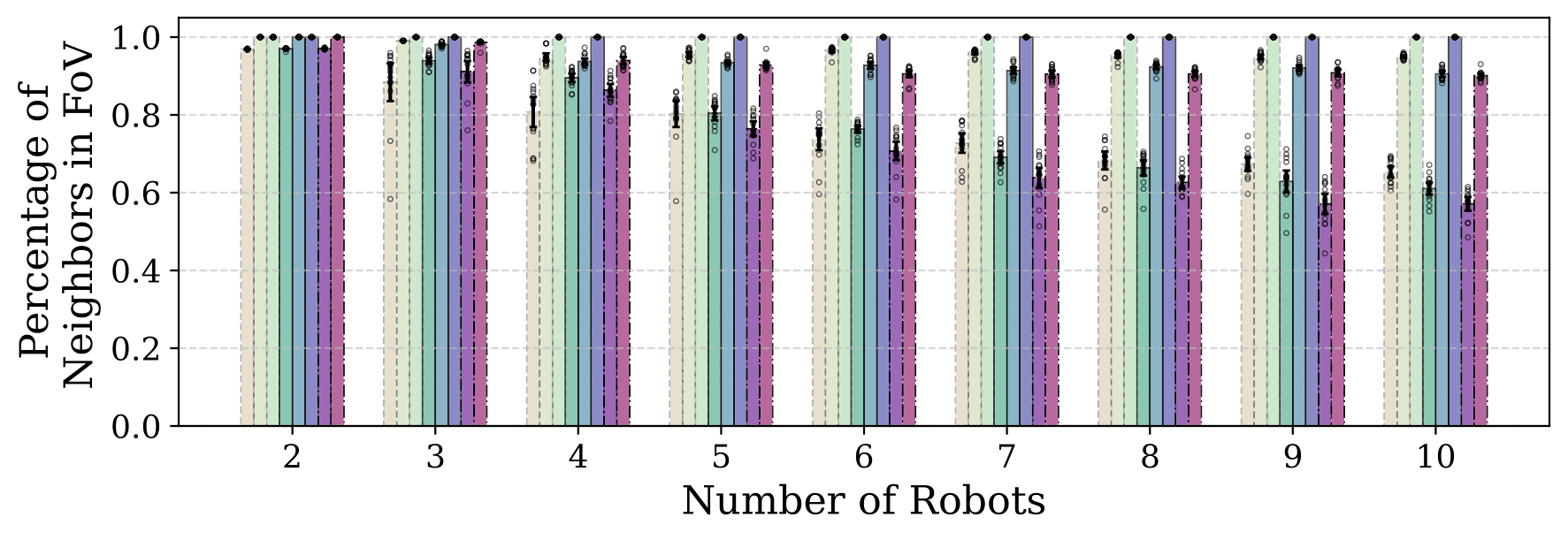}}
    \caption{Performance on different numbers of robots in ``formation" instances. The statistics are obtained in the same way in the ``circle" instances.}
    \label{fig:formation_quantity}
    \vspace{-1em}
\end{figure}
\begin{table}[h]
\centering
\scalebox{0.64}{
\begin{tabular}{|c|c|c|c|c|c|c|c|c|c|}
\hline
\multicolumn{1}{|c|}{Instance/number of Robot} & \multicolumn{1}{c|}{$2[\unit{ms}]$} & \multicolumn{1}{c|}{$3[\unit{ms}]$} & \multicolumn{1}{c|}{$4[\unit{ms}]$} & \multicolumn{1}{c|}{$5[\unit{ms}]$} & \multicolumn{1}{c|}{$6[\unit{ms}]$} & \multicolumn{1}{c|}{$7[\unit{ms}]$} & \multicolumn{1}{c|}{$8[\unit{ms}]$} & \multicolumn{1}{c|}{$9[\unit{ms}]$} & \multicolumn{1}{c|}{$10[\unit{ms}]$} \\
\hline
Baseline ``Circle" & 1.06 & 2.09 & 3.46 & 4.81 & 6.12  & 7.26 & 8.91 & 9.84 & 11.16 \\
\hline
Baseline ``Formation" & 0.76 & 2.04 & 3.43 & 4.87 & 5.59 & 6.94 & 8.44 & 9.79 & 10.56 \\
\hline
MPC-CBF ``Circle" & 5.87 & 10.04 & 13.05 & 18.55 & 21.95 & 23.64 & 28.30 & 31.15 & 35.98 \\
\hline
MPC-CBF ``Formation" & 5.21 & 7.84 & 10.69 & 14.32 & 17.74 & 19.39 & 22.78 & 25.46 & 28.65 \\
\hline
\end{tabular}
}
\caption{The runtime of baseline and MPC-CBF in ``Circle" and ``Formation" instances. The statistics are averaged over $200$ iterations.}
\label{table:runtime}
\vspace{-2em}
\end{table}
}
\subsection{Sensitivity Analysis} \label{sec:sensitivity_analysis}
\newtext{\textbf{Sample interval $\delta$}. We approximate the HOCBF constraints in the horizon using a sampling-based approach. The forward invariance property, however, can be violated due to large sample intervals. We conduct experiments with different sample intervals $\delta$ over a $0.2\unit{s}$ horizon on the ``Circle" instance with $4$ robots and compared with baseline and MPC-CBF without HOCBF constraints in the horizon, i.e., $K_{r}=1$. All methods use slack variables to avoid optimization failure, but with a large slack cost $\Omega \!=\! 1e\!+\!20$.} 
\begin{table}[h]
\centering
\scalebox{0.8}{
\begin{tabular}{|c|c|c|}
\hline
\multicolumn{1}{|c|}{Methods} & \multicolumn{1}{c|}{Percentage of Neighbors in FoV} & \multicolumn{1}{c|}{Runtime} \\
\hline
Baseline & 92.98\% & 12.87$\unit{ms}$ \\
\hline
MPC-CBF, $K_{r}=1$ & 90.87\% & 46.98$\unit{ms}$ \\
\hline
MPC-CBF, $\delta=0.05\unit{s}$ & 97.86\% & 118.17$\unit{ms}$ \\
\hline
MPC-CBF, $\delta=0.1\unit{s}$ & 98.01\% & 68.98$\unit{ms}$ \\
\hline
MPC-CBF, $\delta=0.2\unit{s}$ & 97.98\% & 57.57$\unit{ms}$ \\
\hline
\end{tabular}
}
\caption{The percentage of neighbors in FoV and the runtime of baseline and MPC-CBF as the HOCBF constraint sample intervals change. The statistics are averaged over $15$ trials.}
\label{table:sample_interval}
\vspace{-1em}
\end{table}
\newtext{We use the \textit{Percentage of Neighbors in FoV} to evaluate the efficacy of the set forward invariance. From Table~\ref{table:sample_interval}, we note that all algorithms cannot maintain visual contact at all times. Reactive controllers, such as baseline and MPC-CBF, $K_{r}=1$, perform significantly worse compared to MPC-CBF with horizon HOCBF constraints. The reactive controllers have more aggressive path and heading adjustments, leading to more failures in visual maintenance. Decreasing sample intervals $\delta$ has an insignificant influence on performance, but increases the runtime. We choose $\delta=0.1\unit{s}$, as it imposes denser CBF constraints without a significant runtime penalty. 

\textbf{Detection Noises and Delays}. We evaluate the efficacy of our algorithm under noisy and delayed detections. We use the same experiment setup as the above $4$ robots experiment. A Gaussian noise $\mathcal{N}(\mathbf{0}, \sigma_{\mathrm{noise}}^{2}\mathbf{I})$ is added to the detected neighbor state. From Table~\ref{table:detection_noises_and_delays}, we notice that the \textit{Percentage of Neighbors in FoV} decreases as delay and noise increase. Despite performance degradation, our algorithm maintains $100\%$ success rate on the $4$ robots experiment.} 
\begin{table}[h]
\centering
\scalebox{0.8}{
\begin{tabular}{|c|c|c|}
\hline
\multicolumn{1}{|c|}{Setup} & \multicolumn{1}{c|}{Percentage of Neighbors in FoV} & \multicolumn{1}{c|}{Success Rate} \\
\hline
Delay = 0.05$\unit{s}$ & 91.52\% & 100.00\% \\
\hline
Delay = 0.1$\unit{s}$ & 84.85\% & 100.00\% \\
\hline
Delay = 0.2$\unit{s}$ & 74.23\% & 100.00\% \\
\hline
$\sigma_{\mathrm{noise}}$ = 0.1 & 98.17\% & 100.00\% \\
\hline
$\sigma_{\mathrm{noise}}$ = 0.2 & 97.29\% & 100.00\% \\
\hline
\end{tabular}
}
\caption{The percentage of neighbors in FoV and the success rate as the detection delay and noise change. The statistics are averaged over $15$ trials.}
\label{table:detection_noises_and_delays}
\vspace{-2em}
\end{table}
\section{Physical Experiment}
We build PX4-based UAVs and track their position and orientation with Vicon during the experiment. The motion capture can be replaced by onboard state estimation, such as GPS and a magnetometer during outdoor flights or VIO/LIO during indoor flights, providing a communication-free setup for localization. 
The UAV is equipped with a Jetson Xavier and a RealSense D435 camera, where $\beta_{H} \approx \frac{1}{4}\pi$. 
\newtext{We model the quadrotor as a double integrator and send position, velocity, and acceleration commands to its onboard flight controller. This model mismatch may violate forward invariance in physical experiments. The quadrotor model requires a third-order CLF-CBF~\cite{xiao2023safe}, which we leave for future work. We enforce field-of-view constraints in the x-y plane, noting they can be extended to 3D by adding constraints in the x-z plane. We use onboard Apriltag detection in a single-robot experiment, shown in the accompanying video.} 
\newtext{In the multi-robot experiment, due to unstable Apriltag detection, we simulate detection capabilities using Vicon. In other words,} the robot receives the relative location (detection) of its neighbor from Vicon when that neighbor is in its sensing region. \newtext{We note that relying on Vicon deviates from the decentralized assumption.} 
\newtext{Onboard detection (e.g.,~\cite{ge2022vision}) would introduce noise, delay and missed detections. We discussed the influence of noise and delay in Sec.~\ref{sec:sensitivity_analysis}. Our algorithm handles the missed detections as discussed in Sec.~\ref{sec:filter}.} We demonstrate visual contact during navigation with a two-robot experiment: two UAVs maintain visual contact with each other at all times, as shown in Fig.~\ref{fig:multi_robot_experiments}. 
\begin{figure}[t]
    \centering
    \subfloat
    {\includegraphics[width=0.14\textwidth]{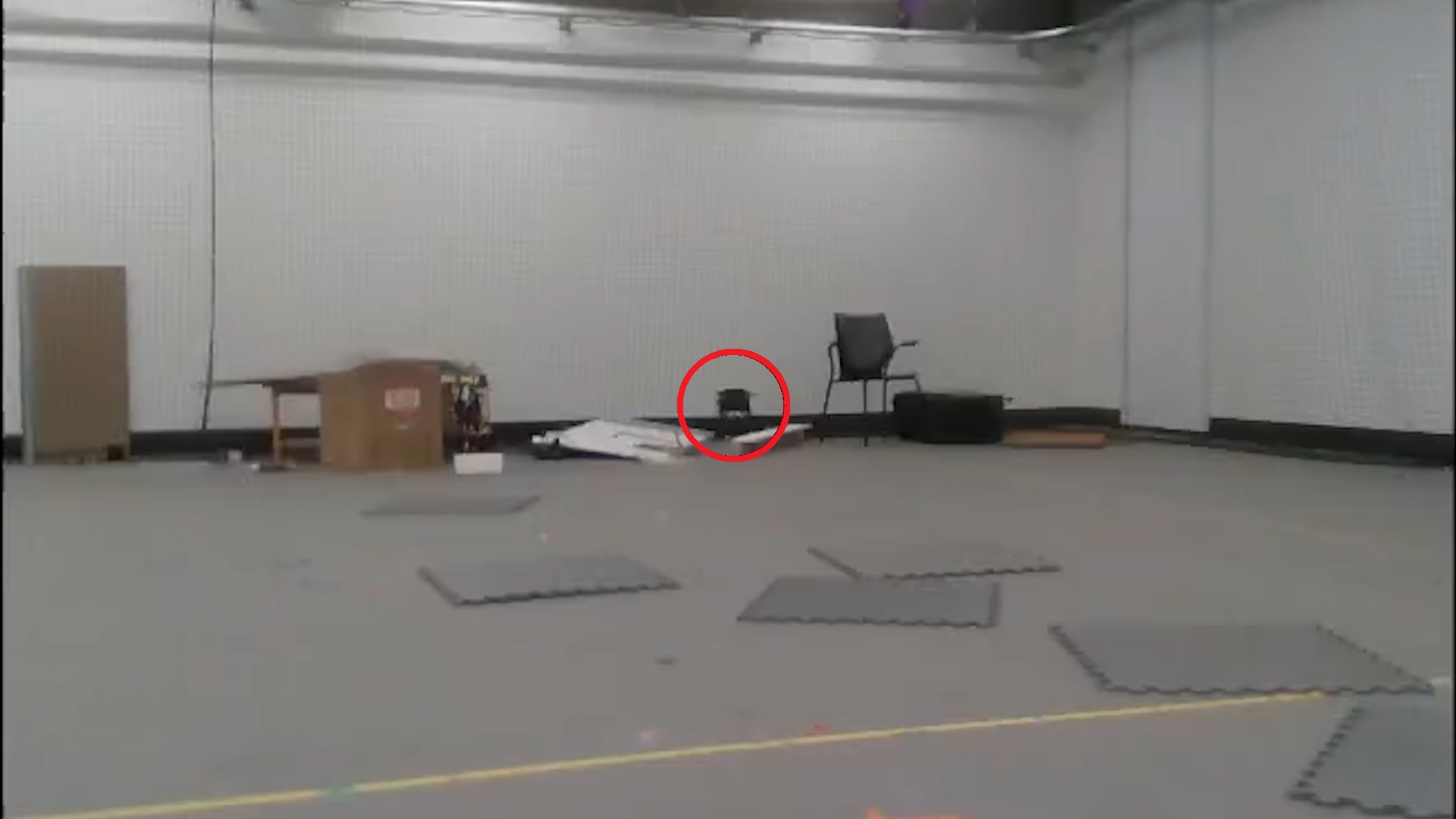}}
    \subfloat
    {\includegraphics[width=0.14\textwidth]{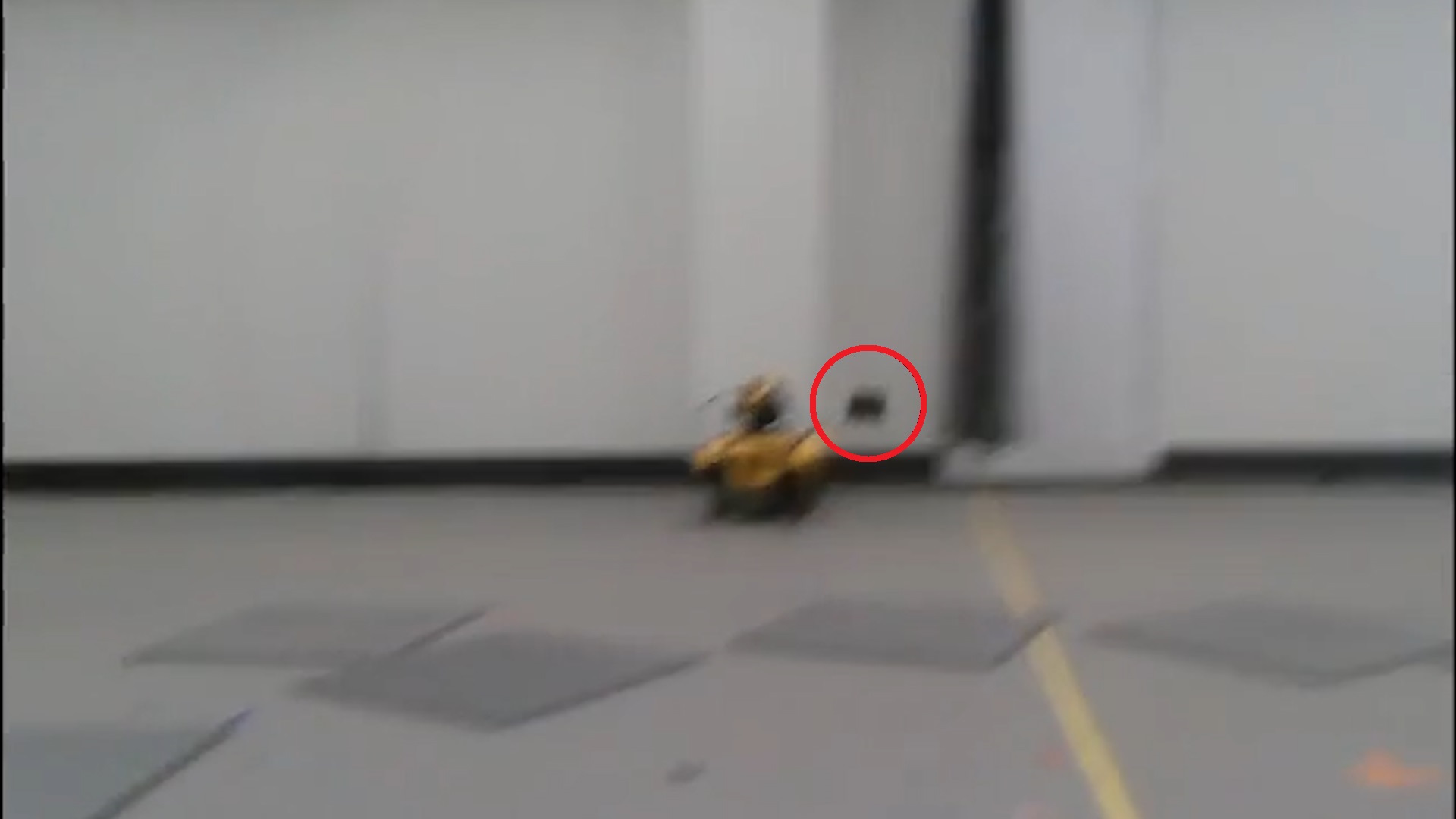}}
    \subfloat
    {\includegraphics[width=0.14\textwidth]{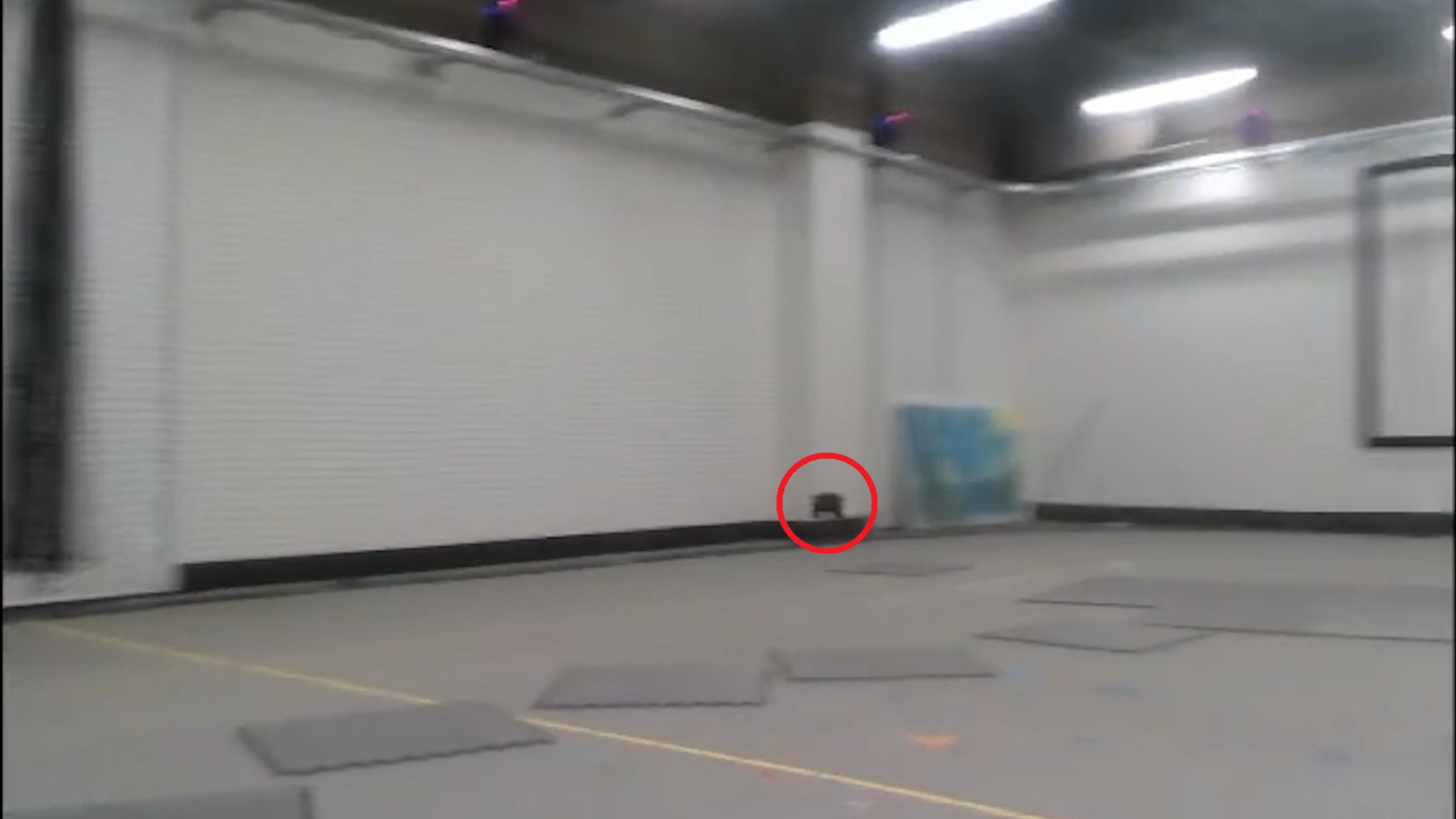}}\\
    \vspace{0.1em}
    \setcounter{subfigure}{0}
    \subfloat[Time = 4.2\unit{s}]{\includegraphics[width=0.14\textwidth]{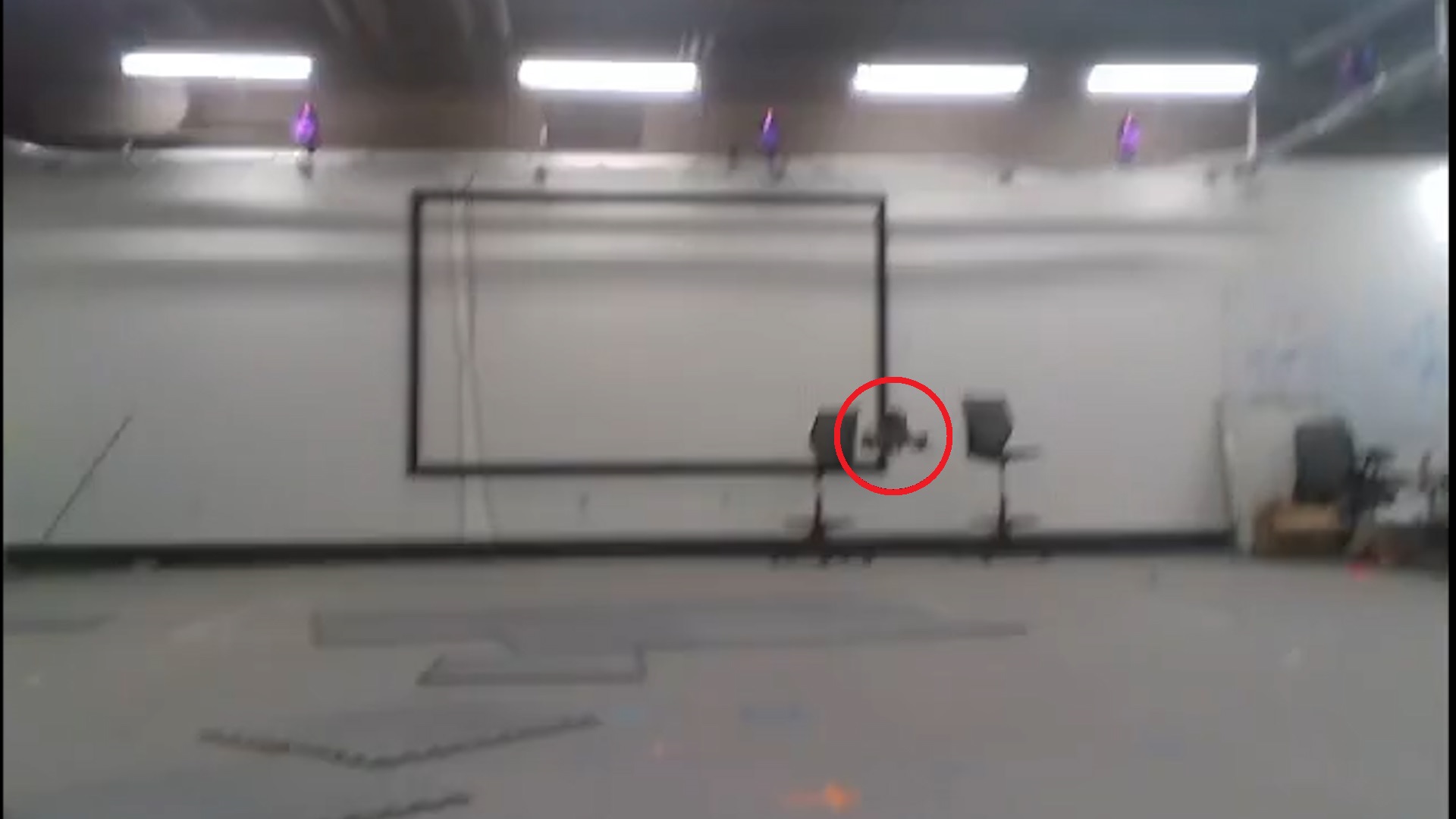}}
    \subfloat[Time = 10.8\unit{s}]{\includegraphics[width=0.14\textwidth]{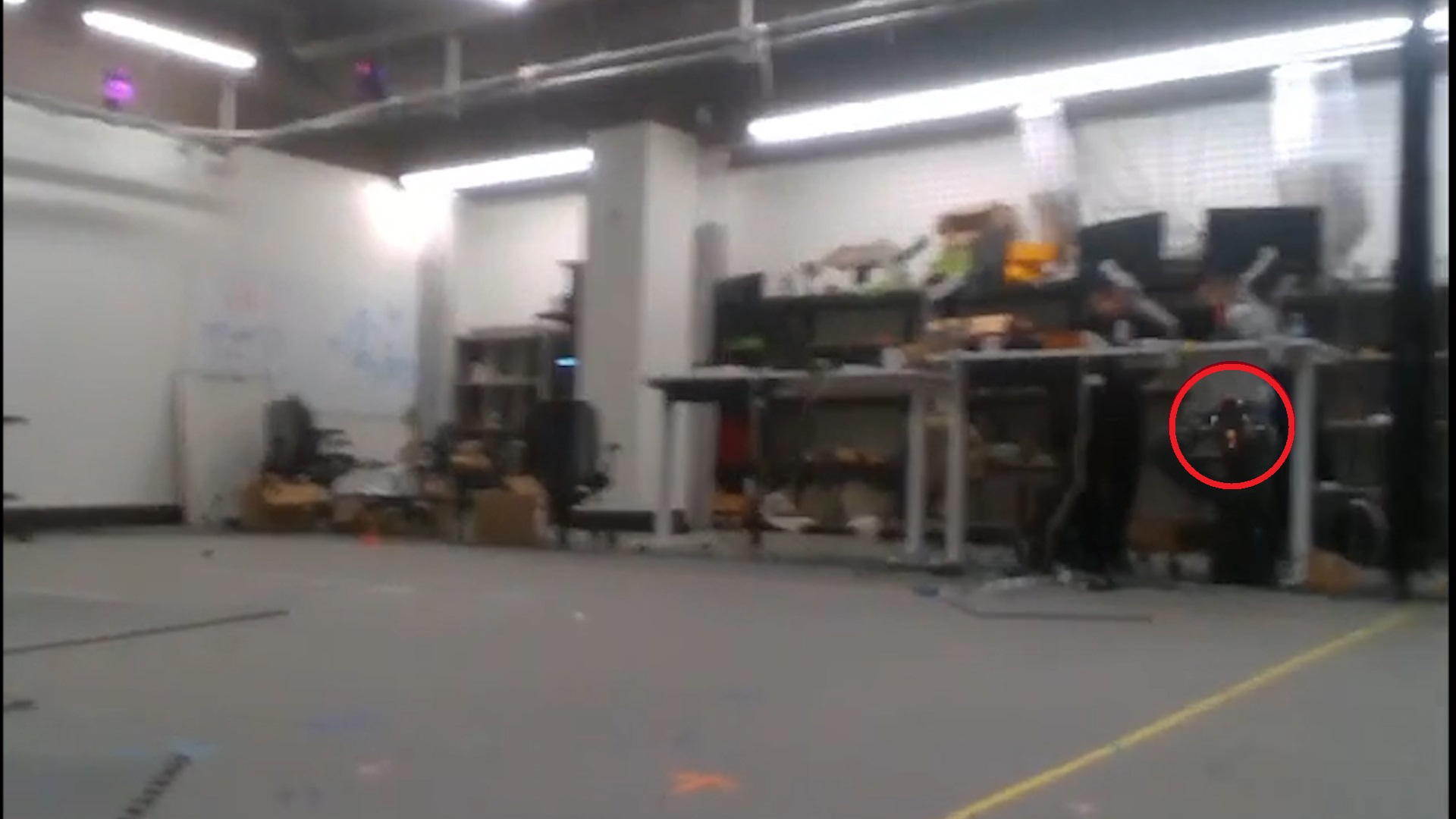}}
    \subfloat[Time = 14.3\unit{s}]{\includegraphics[width=0.14\textwidth]{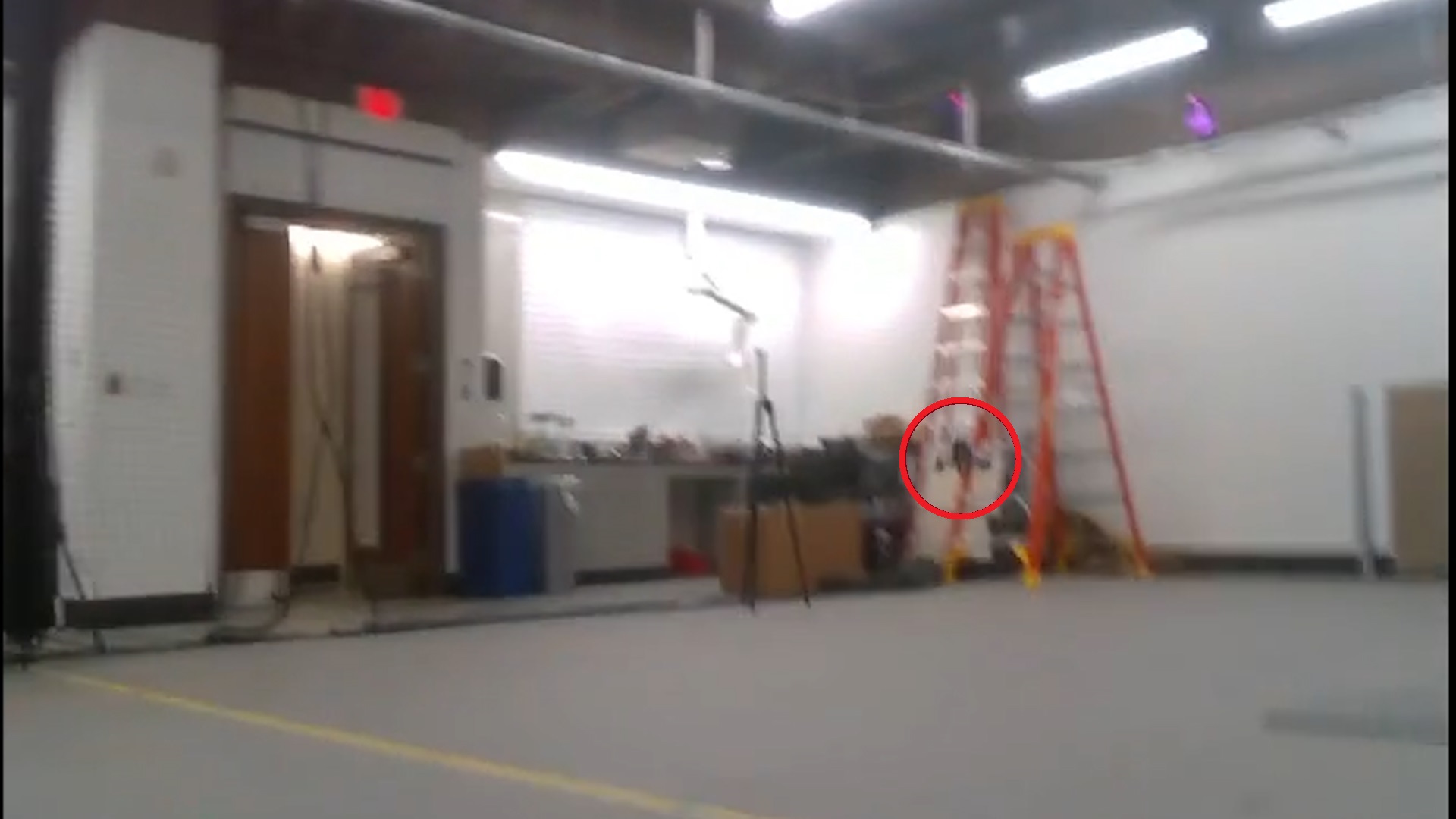}}
    \caption{Snapshots of first-person-view camera recording in the two-robot experiment. 
    The visual contact constraints are satisfied throughout the flight. The other robot is circled in red.}
    \label{fig:multi_robot_experiments}
    \vspace{-1em}
\end{figure}
\vspace{-1em}
\section{Conclusion}
We address online distributed coordination without communication. Robots navigate to their goals while estimating the locations of neighbors by maintaining visual contact. The proposed strategy is robust to temporary tracking loss and able to regain tracking. 
We propose MPC-CBF, a discrete optimization framework to approximate the certified solution. In addition, we propose an efficient SQP to solve MPC-CBF with a QP solver. We verify the efficacy and scalability of our algorithm with $10$ robots in simulation and physical experiments with $2$ custom-built UAVs with onboard cameras. In future work, we aim to extend to non-planar motion, and develop an adaptive controller that is resilient to model perturbation caused by external forces, such as downwash~\cite{kiran2024downwash}. 

\bibliographystyle{IEEEtran}
\bibliography{IEEEabrv, refs}

\begin{thebibliography}{10}
\providecommand{\url}[1]{#1}
\csname url@rmstyle\endcsname
\providecommand{\newblock}{\relax}
\providecommand{\bibinfo}[2]{#2}
\providecommand\BIBentrySTDinterwordspacing{\spaceskip=0pt\relax}
\providecommand\BIBentryALTinterwordstretchfactor{4}
\providecommand\BIBentryALTinterwordspacing{\spaceskip=\fontdimen2\font plus
\BIBentryALTinterwordstretchfactor\fontdimen3\font minus \fontdimen4\font\relax}
\providecommand\BIBforeignlanguage[2]{{%
\expandafter\ifx\csname l@#1\endcsname\relax
\typeout{** WARNING: IEEEtran.bst: No hyphenation pattern has been}%
\typeout{** loaded for the language `#1'. Using the pattern for}%
\typeout{** the default language instead.}%
\else
\language=\csname l@#1\endcsname
\fi
#2}}

\bibitem{drew2021multi}
D.~S. Drew, ``Multi-agent systems for search and rescue applications,'' \emph{Current Robotics Reports}, vol.~2, pp. 189--200, 2021.

\bibitem{liu2024multi}
J.~Liu, P.~Li, Y.~Wu, G.~S. Sukhatme, V.~Kumar, and L.~Zhou, ``Multi-robot target tracking with sensing and communication danger zones,'' \emph{arXiv preprint arXiv:2404.07880}, 2024.

\bibitem{li2023nonlinear}
G.~Li and G.~Loianno, ``Nonlinear model predictive control for cooperative transportation and manipulation of cable suspended payloads with multiple quadrotors,'' in \emph{2023 IEEE/RSJ Int. Conf. on Intell. Robots and Syst. (IROS)}.\hskip 1em plus 0.5em minus 0.4em\relax IEEE, 2023, pp. 5034--5041.

\bibitem{gielis2022critical}
J.~Gielis, A.~Shankar, and A.~Prorok, ``A critical review of communications in multi-robot systems,'' \emph{Current robotics reports}, vol.~3, 2022.

\bibitem{huang2023360vot}
H.~Huang, Y.~Xu, Y.~Chen, and S.-K. Yeung, ``360vot: A new benchmark dataset for omnidirectional visual object tracking,'' in \emph{Proceedings of the IEEE/CVF Int. Conf. on Comput. Vis.}, 2023, pp. 20\,566--20\,576.

\bibitem{xiao2021hoclbf}
W.~Xiao, C.~A. Belta, and C.~G. Cassandras, ``High order control lyapunov-barrier functions for temporal logic specifications,'' in \emph{Amer. Control Conf. (ACC)}.\hskip 1em plus 0.5em minus 0.4em\relax IEEE, 2021, pp. 4886--4891.

\bibitem{ames2019control}
A.~D. Ames, S.~Coogan, M.~Egerstedt, G.~Notomista, K.~Sreenath, and P.~Tabuada, ``Control barrier functions: Theory and applications,'' in \emph{18th Eur. Control Conf. (ECC)}.\hskip 1em plus 0.5em minus 0.4em\relax IEEE, 2019, pp. 3420--3431.

\bibitem{abdi2023safe}
H.~Abdi, G.~Raja, and R.~Ghabcheloo, ``Safe control using vision-based control barrier function (v-cbf),'' in \emph{2023 IEEE Int. Conf. on Robot. and Automat. (ICRA)}.\hskip 1em plus 0.5em minus 0.4em\relax IEEE, 2023, pp. 782--788.

\bibitem{xu2017correctness}
X.~Xu, J.~W. Grizzle, P.~Tabuada, and A.~D. Ames, ``Correctness guarantees for the composition of lane keeping and adaptive cruise control,'' \emph{IEEE Trans. Autom. Sci. Eng.}, vol.~15, no.~3, 2017.

\bibitem{wang2017safety}
L.~Wang, A.~D. Ames, and M.~Egerstedt, ``Safety barrier certificates for collisions-free multirobot systems,'' \emph{IEEE Trans. Robot.}, vol.~33, no.~3, pp. 661--674, 2017.

\bibitem{catellani2023distributed}
M.~Catellani and L.~Sabattini, ``Distributed control of a limited angular field-of-view multi-robot system in communication-denied scenarios: A probabilistic approach,'' \emph{IEEE Robot. Automat. Lett.}, vol.~9, no.~1, 2023.

\bibitem{trimarchi2024control}
B.~Trimarchi, F.~Schiano, and R.~Tron, ``A control barrier function candidate for quadrotors with limited field of view,'' \emph{arXiv preprint arXiv:2410.01277}, 2024.

\bibitem{zhou2024control}
M.~Zhou, M.~Shaikh, V.~Chaubey, P.~Haggerty, S.~Koga, D.~Panagou, and N.~Atanasov, ``Control strategies for pursuit-evasion under occlusion using visibility and safety barrier functions,'' \emph{arXiv preprint arXiv:2411.01321}, 2024.

\bibitem{santilli2022multirobot}
M.~Santilli, P.~Mukherjee, R.~K. Williams, and A.~Gasparri, ``Multirobot field of view control with adaptive decentralization,'' \emph{IEEE Transactions on Robotics}, vol.~38, no.~4, pp. 2131--2150, 2022.

\bibitem{zeng2021safety}
J.~Zeng, B.~Zhang, and K.~Sreenath, ``Safety-critical model predictive control with discrete-time control barrier function,'' in \emph{2021 Amer. Control Conf. (ACC)}.\hskip 1em plus 0.5em minus 0.4em\relax IEEE, 2021, pp. 3882--3889.

\bibitem{sforni2024receding}
L.~Sforni, G.~Notarstefano, and A.~D. Ames, ``Receding horizon cbf-based multi-layer controllers for safe trajectory generation,'' in \emph{Amer. Control Conf. (ACC)}.\hskip 1em plus 0.5em minus 0.4em\relax IEEE, 2024, pp. 4765--4770.

\bibitem{dickson2024spline}
A.~Dickson, C.~G. Cassandras, and R.~Tron, ``Spline trajectory tracking and obstacle avoidance for mobile agents via convex optimization,'' \emph{arXiv preprint arXiv:2403.16900}, 2024.

\bibitem{joy2000bernstein}
K.~I. Joy, ``Bernstein polynomials,'' \emph{On-Line Geometric Modeling Notes}, vol.~13, no.~5, 2000.

\bibitem{ames2014control}
A.~D. Ames, J.~W. Grizzle, and P.~Tabuada, ``Control barrier function based quadratic programs with application to adaptive cruise control,'' in \emph{53rd IEEE Conf. on Decis. and Control}.\hskip 1em plus 0.5em minus 0.4em\relax IEEE, 2014, pp. 6271--6278.

\bibitem{yano2020theory}
K.~Yano, \emph{The theory of Lie derivatives and its applications}.\hskip 1em plus 0.5em minus 0.4em\relax Courier Dover Publications, 2020.

\bibitem{xiao2021high}
W.~Xiao and C.~Belta, ``High-order control barrier functions,'' \emph{IEEE Trans. on Autom. Control}, vol.~67, no.~7, pp. 3655--3662, 2021.

\bibitem{bertoncelli2024directed}
F.~Bertoncelli, V.~Radhakrishnan, M.~Catellani, G.~Loianno, and L.~Sabattini, ``Directed graph topology preservation in multi-robot systems with limited field of view using control barrier functions,'' \emph{IEEE Access}, 2024.

\bibitem{pan2025hierarchical}
L.~Pan, Y.~Wang, and N.~Ayanian, ``Hierarchical trajectory (re) planning for a large scale swarm,'' \emph{arXiv preprint arXiv:2501.16743}, 2025.

\bibitem{mercy2017spline}
T.~Mercy, R.~Van~Parys, and G.~Pipeleers, ``Spline-based motion planning for autonomous guided vehicles in a dynamic environment,'' \emph{IEEE Trans. Control Syst. Technol.}, vol.~26, no.~6, pp. 2182--2189, 2017.

\bibitem{honig2018trajectory}
W.~H{\"o}nig, J.~A. Preiss, T.~S. Kumar, G.~S. Sukhatme, and N.~Ayanian, ``Trajectory planning for quadrotor swarms,'' \emph{IEEE Trans. Robot.}, vol.~34, no.~4, pp. 856--869, 2018.

\bibitem{luis2020online}
C.~E. Luis, M.~Vukosavljev, and A.~P. Schoellig, ``Online trajectory generation with distributed model predictive control for multi-robot motion planning,'' \emph{IEEE Robot. Automat. Lett.}, vol.~5, no.~2, 2020.

\bibitem{pan2024hierarchical}
L.~Pan, K.~Hsu, and N.~Ayanian, ``Hierarchical large scale multirobot path (re) planning,'' in \emph{2024 IEEE/RSJ Int. Conf. on Intell. Robots and Syst. (IROS)}.\hskip 1em plus 0.5em minus 0.4em\relax IEEE, 2024, pp. 5319--5326.

\bibitem{maoudj2024improved}
A.~Maoudj and A.~L. Christensen, ``Improved decentralized cooperative multi-agent path finding for robots with limited communication,'' \emph{Swarm Intelligence}, vol.~18, no.~2, pp. 167--185, 2024.

\bibitem{xiao2023safe}
W.~Xiao, C.~G. Cassandras, and C.~Belta, \emph{Safe autonomy with control barrier functions: theory and applications}.\hskip 1em plus 0.5em minus 0.4em\relax Springer, 2023.

\bibitem{ge2022vision}
R.~Ge, M.~Lee, V.~Radhakrishnan, Y.~Zhou, G.~Li, and G.~Loianno, ``Vision-based relative detection and tracking for teams of micro aerial vehicles,'' in \emph{IEEE/RSJ Int. Conf. on Intell. Robots and Syst. (IROS)}.\hskip 1em plus 0.5em minus 0.4em\relax IEEE, 2022, pp. 380--387.

\bibitem{kiran2024downwash}
A.~Kiran, N.~Harutyunyan, N.~Ayanian, and K.~Breuer, ``Downwash dynamics: Impact of separation on forces, moments, and velocities for dense quadrotor flight,'' in \emph{AIAA AVIATION FORUM AND ASCEND 2024}, 2024, p. 4145.

\end{thebibliography}

\end{document}